\pgfplotsset{compat=1.10}
\definecolor{tblblue}{RGB}{101,124,191}
\definecolor{tblred}{rgb}{1,0.93,0.93}
\definecolor{DarkBlue}{rgb}{0,0,0.7} 
\definecolor{BrickRed}{RGB}{203,65,84}
\newtheorem{lemma}{Lemma}
\newtheorem{definition}{Definition}
\newtheorem{theorem}{Theorem}
\newtheorem{proposition}{Proposition}
\newcommand{\clr}[1]{{#1}}
\newcommand\relu{\mathrm{relu}}
\newcommand\cn{\mathrm{cn}}
\renewcommand\diag{\mathrm{diag}}
\newcommand\loss{L}
\newcommand\setU{\mathcal U}
\newcommand\kout{k_{\mathrm{out}}}
\newcommand\prior{G}
\newcommand\param{\mC}
\newcommand\img{\vx}
\newcommand\noise{\eta}
\newcommand\Nparam{N}
\newcommand\setN{\mathcal N}
\newcommand\setB{\mathcal B}
\newcommand\setG{\mathcal G}
\newcommand\empmean{\mathrm{mean}}
\newcommand\empvar{\mathrm{var}}
\begin{document}

\begin{center}

{\bf{\LARGE{
Regularizing linear inverse problems with 
convolutional neural networks
}}}

\vspace*{.2in}

{\large{
\begin{tabular}{cccc}
Reinhard Heckel\\
\end{tabular}
}}

\vspace*{.05in}

\begin{tabular}{c}
Dept. of Electrical and Computer Engineering, Technical University of Munich \\
\end{tabular}

\vspace*{.1in}

\today

\vspace*{.1in}

\end{center}


\begin{abstract}
Deep convolutional neural networks trained on large datsets have emerged as an intriguing alternative for compressing images and solving inverse problems such as denoising and compressive sensing. 
However, it has only recently been realized that even without training, convolutional networks can function as concise image models, and thus regularize inverse problems. 
In this paper, we provide further evidence for this finding by studying variations of convolutional neural networks that map few weight parameters to an image. 
The networks we consider only consist of convolutional operations, with either fixed or parameterized filters followed by ReLU non-linearities. 
We demonstrate that with both fixed and parameterized convolutional filters those networks enable representing images with few coefficients. 
What is more, the underparameterization enables regularization of inverse problems, in particular recovering an image from few observations. 
We show that, similar to standard compressive sensing guarantees, on the order of the number of model parameters many measurements suffice for recovering an image from compressive measurements.
Finally, we demonstrate that signal recovery with a un-trained convolutional network 
outperforms standard $\ell_1$ and total variation minimization for magnetic resonance imaging (MRI). 
\end{abstract}

\section{Introduction}
In this paper, we consider the problem of recovering an unknown signal $\vx^\ast \in \reals^n$ from few noisy measurements
\[
\vy = f(\vx^\ast) + \noise 
 \quad \in \reals^m
\]
where $f$ is a known measurement operator and $\noise$ is additive noise. We focus on the classical compressive sensing problem, where $f(\vx) = \mA \vx$ is a linear measurement operator. 
Since the number of measurements, $m$, is smaller than the dimension of the image $\vx^\ast$, prior assumption in form of an image model are required to regularize the inverse problem of reconstructing the signal $\vx^\ast$ from the measurement $\vy$.

Image models play a central role in practically every image-related application in signal processing, computer vision, and machine learning. 
An image model captures low-dimensional structure of natural images, which in turn enables efficient image recovery or processing. 
Image models have continuously developed from classical handcrafted models such as
overcomplete bases, wavelets, and sparse representations~\cite{mallat_wavelet_2008} to learned image representations in the form of deep neural networks. 
Advances in image models have translated into increasingly better performance in the applications they are build for, with trained deep networks often outperforming their competitors 
for tasks ranging from compression over denoising to compressive sensing~\citep{toderici_variable_2015,agustsson_soft_2017,theis_lossy_2017,burger_image_2012,zhang_beyond_2017,bora_compressed_2017,heckel_deep_2018}.

Most image generating deep neural networks are convolutional neural networks. Examples include the generators in generative adversarial networks~\cite{goodfellow_generative_2014,radford_unsupervised_2015},  variational and traditional autoencoders~\cite{pu_variational_2016,hinton_reducing_2006},
as well as autoencoder like structures such as the U-net~\citep{ronneberger_u-net_2015}. 
All the aforementioned image generating convolutional neural networks consist of only few operations: Upsampling, convolutions, and application of non-linearities. 

For solving inverse problems 
convolutional networks are typically trained on large datasets. A work by Ulyanov et al.~\cite{ulyanov_deep_2018}, however, has shown that overparameterized convolutional deep networks of autoencoder architecture enable solving denoising, inpainting, and super-resolution problems well even without any training, by fitting the weights of the network to a single image. Subsequently Veen et al.~\cite{veen_compressed_2018} have demonstrated that this approach also enables solving compressive sensing problems. 
However, since the network is highly overparameterized, this technique critically relies on regularization by i) early stopping and ii) adding noise to the input of the network during optimization.

Later that year, the paper~\cite{heckel_deep_2019} proposed a simple image model, called the deep decoder, that, in contrast to the network in the papers~\cite{ulyanov_deep_2018,veen_compressed_2018} is underparameterized and can therefore both compress images as well as regularize inverse problems, without any further regularization in the form of early stopping or the alike. 
The deep decoder only consists of upsampling operations and linearly combining channels, and does not use parameterized convolutions as in a conventional neural network. However its structure is closely related to a convolutional network in that it uses pixelwise linear combinations of channels, also known as 1x1 convolutions. 

In this work, we build on those findings by studying a variety of convolutional generators (which can be viewed as variants of the deep decoder~\cite{heckel_deep_2019}) for recovering an image from few measurements, both in theory and practice.
Our key contributions are as follows:
\begin{itemize}
\item 
We start with studying convolutional generators as concise image generators, and find that architectures with layers that 
i) upsample and convolve with a fixed interpolation kernels (such as the deep decoder) 
ii) upsample and convolve with a parameterized convolution kernel (i.e., transposed convolutions) 
iii) only convolve with fixed interpolation kernels 
iv) only convolve with parameterized convolution kernels, 
all perform well for compressing an image into a concise set of network weights.

\item We provide a theoretical explanation why convolutional decoders enable concise image representations: natural images can be modeled as piecewise linear/smooth functions and a convolutional decoder can represent a piecewise linear function of $s$ pieces concisely, that is with $O(s)$ parameters.

\item We then focus on recovering an image from few measurement, known as compressive sensing,
and show theoretically (for all variants of generators mentioned above), on the order of the parameters of the network many measurements are sufficient for recovery.

\item 
Most importantly, we show that compressive sensing regularized with an un-trained convolutional generator enables 
reconstruction of an image from compressive Magnetic Resonance Imaging (MRI) measurements with better performance than traditional sparse recovery techniques such as $\ell_1$ and total variation minimization, on real data.


\end{itemize}


\section{Convolutional generators}

We consider architectures that map a fixed tensor (typically chosen randomly) $\mB_1 = [\vb_{11}, \ldots, \vb_{1k}]\in \reals^{n_1 \times k}$ consisting of $k$ many $n_1$-dimensional channels to an $n_d \times \kout$ dimensional image, where $\kout=1$ for a grayscale image, and $\kout=3$ for an RGB image with three color channels. Throughout, $n_i$ has two dimension if the output of the network is an image, and one dimension if the output is a vector.
The network transforms the fixed input tensor to an image using only upsampling/no-upsampling and convolutional operations.
Specifically, the channels in the $(i+1)$-th layer are given by, for $i = 1, \ldots, d$,
\begin{align}
\label{eq:onelayerdecorig}
&\mB_{i+1}
= \cn\left( \relu
\left( \left[
\sum_{j=1}^k \mT(\vc_{i1j}) \mU_i \vb_{ij}, 
\ldots,
\sum_{j=1}^k \mT(\vc_{ikj}) \mU_i \vb_{ij}
\right] \right) \right). 
\end{align}
Here, $\mT(\vc)$ is the operator performing a convolution with the kernel $\vc \in \reals^{\ell}$, and $\mU_i$ is either the identity (no upsampling) or an upsampling operator that upsamples the signal by a factor of two. 
For example, for a one-dimensional signal, the upsampling operator applied to $\vx = [x_1,x_2,\dots, x_n]$ yields $\mU \vx  = [x_1,0,x_2,0,\dots, x_n,0]$.
Moreover, $\cn(\cdot)$ is a channel normalization operation, that, when applied to a channel $\vz_{ij}$ yields 
$
\vz'_{ij} = \frac{ \vz_{ij} - \empmean(\vz_{ij})  }{ \sqrt{ \empvar(\vz_{ij}) +\epsilon}} + \beta_{ij}
$,
where $\empmean$ and $\empvar$ compute the empirical mean and variance, and $\beta_{ij}$ is a parameter learned independently for each channel, and $\epsilon$ is a fixed small constant for numerical stability. 
The channel normalization operations is critical for optimizing convolutional generators~\cite{dai_channel_2019}.
Finally, the output of the $d$-layer network is formed as
\[
\img = \mathrm{sigmoid}\clr{(\mB_d \mC_{d})},
\]
where $\clr{\mC_{d}} \in \reals^{k \times \kout}$.
We consider the following architectures:
\begin{itemize}
\item 
{\bf i) Fixed interpolation and upsampling.}
This is the original deep decoder architecture~\citep{heckel_deep_2019}, which applies
bi-linear upsampling followed by linearly combining the channels with learnable coefficients.
This corresponds to choosing $\mU_d = \mI$, all other $\mU_i$ as upsampling operators, and the operators $\mT$ such that they convolve with the kernel
\[
\vc_{isj} 
=
1/16
\begin{bmatrix}
1 & 3 & 3 & 1 \\
3 & 9 & 9 & 3 \\
3 & 9 & 9 & 3 \\
1 & 3 & 3 & 1
\end{bmatrix} c_{isj},
\]
where $c_{isj} \in \reals$ is a learnable parameter.
\item 
{\bf ii) Parameterized convolutions and upsampling.}
This is equivalent to applying transposed convolutional layers with learnable filters; specifically $\mU_d = \mI$, all other $\mU_i$ as upsampling operators, and choosing the convolutional kernels as $4\times 4$ learnable filters.
\item 
{\bf iii) Fixed interpolation kernels and no upsampling.}
Same fixed convolutional kernels as in architecture i), but no upsampling, i.e., $\mU_i = \mI$ for all $i$.
This network convolves with fixed convolutional kernels, linearly combines channels, and applies non-linearities.
\item 
{\bf iv) Parameterized interpolation kernels and no upsampling.} Same as the deconvolution decoder iii), but without upsampling, i.e., $\mU_i = \mI$ for all $i$.
\end{itemize}

\section{Image representations with convolutional generators}

We first show that each convolutional architecture i-iv is capable of representing an image concisely with few parameters, demonstrating that convolutional generators---with both learned or fixed convolutions enable concise image representations, without any training! This shows that the results in~\citep{heckel_deep_2019} extend to a broader class of networks than architecture i. We then provide a potential theoretical explanation by showing that convolutional generators can represent piecewise linear function with few coefficients; and natural images are approximately piecewise linear/smooth. 

\subsection{Convolutional generators enable concise image representations
\label{sec:convgen}
}

The compression performance of convolutional generators is on par with state-of-the-art wavelet thresholding. 
To demonstrate this, we take architectures i-iv with $d=6$ layers and output dimension $512\times 512 \times 3$, and choose the number of channels, $k$, such that the compression factor, defined as the output dimensionality ($3 \cdot 512^2$) divided by the number of parameters of the network, $\Nparam$, is 32 and 8, respectively. 
We then take 100 randomly chosen images from the ImageNet validation set, 
and for each image $\vx^\ast$ and architecture fit the networks weights $\mC$ by minimizing the loss
$
\loss(\param,\vx^\ast) = \norm[2]{\prior(\param) - \img^\ast }^2
$
using the Adam optimizer. 
We then compute for each image the corresponding peak-signal-to-noise ratio (PSNR), 
and compare compression performance to wavelet compression~\citep{antonini_image_1992}, by representing each image with the $\Nparam$-largest wavelet coefficients. 
The results, depicted in Figure~\ref{fig:deepdeccompression}, show that each architecture has compression performance comparable to wavelet thresholding or better.
Wavelets are a strong baseline as they are one of the best methods to represent images with few coefficients.

The main takeaway from this experiment is that convolutional generators with both parameterized and fixed convolutions enable concise image models, even without any learning on a dataset. 
As we see next, forcing an image to lie in the range of such a generator enables regularization of inverse problems.

Note that architectures iii and iv are computationally inefficient since each channel is high-dimensional (512x512 dimensional in our example), and therefore in the following we only focus on architectures i and ii which are more efficient. 

\pgfplotstableset{
    create on use/D/.style={create col/copy column from table={./dat/imagenet_fit_psnrs_deconv16.dat}{0}}}
\pgfplotstableset{
    create on use/W/.style={create col/copy column from table={./dat/imagenet_fit_psnrs_wavelet_1_64.dat}{0}}}
\pgfplotstableset{
    create on use/DD/.style={create col/copy column from table={./dat/imagenet_fit_psnrs_up64.dat}{0}}}  
\pgfplotstableset{
    create on use/G/.style={create col/copy column from table={./dat/imagenet_fit_psnrs_gaussian_64.dat}{0}}}  
\pgfplotstableset{
    create on use/DNU/.style={create col/copy column from table={./dat/imagenet_fit_psnrs_3_k21.dat}{0}}}

\pgfplotstableset{
    create on use/D128/.style={create col/copy column from table={./dat/imagenet_fit_psnrs_deconv33.dat}{0}}}
\pgfplotstableset{
    create on use/W128/.style={create col/copy column from table={./dat/imagenet_fit_psnrs_wavelet_1_128.dat}{0}}}
\pgfplotstableset{
    create on use/DD128/.style={create col/copy column from table={./dat/imagenet_fit_psnrs_up128.dat}{0}}}
\pgfplotstableset{
    create on use/G128/.style={create col/copy column from table={./dat/imagenet_fit_psnrs_gaussian_128.dat}{0}}}
\pgfplotstableset{
    create on use/DNU128/.style={create col/copy column from table={./dat/imagenet_fit_psnrs_3_k43.dat}{0}}}


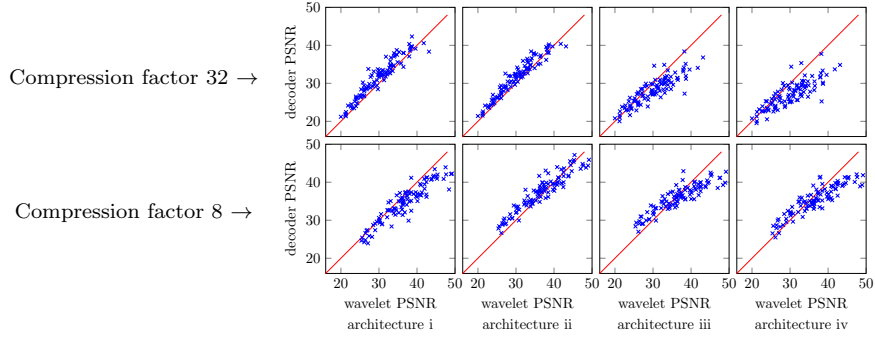
\begin{figure}
\begin{center}

\usetikzlibrary{fadings,shapes.arrows,shadows}   

\begin{tikzpicture}[scale=0.51]

\node at (-5,1.5) {\scriptsize Compression factor $32$ $\rightarrow$} ;
\node at (-5,-2) {\scriptsize Compression factor $8$ $\rightarrow$} ;

\begin{groupplot}[
legend style={at={(1,1)}},
         title style={at={(0.5,-1.8cm)}, anchor=south}, group
         style={group size= 4 by 2, xlabels at=edge bottom, ylabels at=edge left, yticklabels at=edge left,xticklabels at=edge bottom,
           horizontal sep=0.2cm, vertical sep=0.2cm}, xlabel={iteration}, ylabel={decoder PSNR},
         width=0.3\textwidth,height=0.3\textwidth, xmax=50, ymax=50,xmin=16,ymin=16]
         
         \nextgroupplot[
         ] 
	\addplot +[only marks,mark = x] table [x=W, y=DD] {./dat/imagenet_fit_psnrs.dat}; to 
	\addplot +[mark=none] coordinates
{(15,15)    (48,48)};

         \nextgroupplot[] 
	\addplot +[only marks,mark = x] table [x=W, y=D] {./dat/imagenet_fit_psnrs.dat};
	\addplot +[mark=none] coordinates
{(15,15)    (48,48)};

         \nextgroupplot[] 
	\addplot +[only marks,mark = x] table [x=W, y=G] {./dat/imagenet_fit_psnrs.dat};
	\addplot +[mark=none] coordinates
{(15,15)    (48,48)};

         \nextgroupplot[] 
	\addplot +[only marks,mark = x] table [x=W, y=DNU] {./dat/imagenet_fit_psnrs.dat};
	\addplot +[mark=none] coordinates
{(15,15)    (48,48)};

         \nextgroupplot[title={architecture i},xlabel={wavelet PSNR},ylabel={decoder PSNR}] 
	\addplot +[only marks,mark = x] table [x=W128, y=DD128] {./dat/imagenet_fit_psnrs.dat};
	\addplot +[mark=none] coordinates
{(15,15)    (48,48)};

         \nextgroupplot[title={architecture ii},xlabel={wavelet PSNR}] 
	\addplot +[only marks,mark = x] table [x=W128, y=D128] {./dat/imagenet_fit_psnrs.dat};
	\addplot +[mark=none] coordinates
{(15,15)    (48,48)};

         \nextgroupplot[title={architecture iii
         },xlabel={wavelet PSNR}] 
	\addplot +[only marks,mark = x] table [x=W128, y=DNU128] {./dat/imagenet_fit_psnrs.dat};
	\addplot +[mark=none] coordinates
{(15,15)    (48,48)};

         \nextgroupplot[title={architecture iv 
         }
         ,xlabel={wavelet PSNR}] 
	\addplot +[only marks,mark = x] table [x=W128, y=G128] {./dat/imagenet_fit_psnrs.dat};
	\addplot +[mark=none] coordinates
{(15,15)    (48,48)};

\end{groupplot}
\end{tikzpicture}
\end{center}
\caption{
\label{fig:deepdeccompression}
Representing images with convolutional generators: each dot is the PSNR obtained by representing an ImageNet image with the respective architecture as well as with the same number of wavelet coefficients. 
While all architectures are efficient at representing images, architecture ii (parameterized convolutions and upsampling) performs particularly well, closely followed by i (fixed convolutions and upsampling).
}
\end{figure}


\subsection{Representational capabilities of convolutional architectures}

Natural images are piecewise linear/smooth with sharp edges. 
We next show that a convolutional architecture is well suited for representing such functions by proving that it can 
represent a (discrete) piecewise linear function with $s$ pieces with $O(s)$ coefficients. 
Thus, by constraining the number of coefficients of a convolutional encoder to be small, we are enforcing a (piecewise) smooth signal at the output.


Consider a $d$-layer network with output $\reals^{n_d}$. 
We consider an architecture with linear upsampling operations truncated at the boundaries so that the operator 
$\mT(\vc) \mU_i 
=
\mM_i c
\colon \reals^{n_i}  \to \reals^{2 n_i -1}$
applied to a signal $\vb \in \reals^{n_i}$ becomes (for $n_i = 3$, as an example):
\[
\mM_2 \vb
= 
\frac{1}{2}
\begin{bmatrix}
2 & 1 & 0 & 0 & 0 \\
1 & 2 & 1 & 0 & 0 \\
0 & 1 & 2 & 1 & 0 \\
0 & 0 & 1 & 2 & 1 \\
0 & 0 & 0 & 1 & 2  
\end{bmatrix}
\begin{bmatrix}
b_1 \\
0 \\
b_2 \\
0 \\
b_3
\end{bmatrix}
= 
\begin{bmatrix}
b_1 \\
1/2(b_1+b_2) \\
b_2 \\
1/2(b_2+b_3) \\
b_3
\end{bmatrix}.
\]
We set the initial volume to 
$\mB_1 = \begin{bmatrix}
	1 & 0 \\
	0 & 1 
	\end{bmatrix}$, and suppose the network has a bias term, so that the $(i+1)$-st channel is given by
\[
\mB_{i+1}
=
\relu\left(
\mM_i \mB_i \mC_i + \vect{1} \transp{\va_i}
\right).
\]
Here, $\va_i$ is a vector containing biases that are added to each channel individually, and $\mC_i$ is a coefficient matrix associated with the $i$-th layer, as before.
Note that the bias term is included in the channel normalization in the original formulation~\eqref{eq:onelayerdecorig}. 
The output of the $d$-layer network is formed as 
$G(\mC, \va) = \mB_d \vc_d + a_d$, where $\vc_d$ are coefficients forming a linear combinations of the channels $\mB_d$, and $a_d$ is an additional bias term.
Here, $\mC = (\mC_1,\ldots, \mC_d)$ and $\va = (\va_1,\ldots,\va_d)$ are the coefficients of the network.
Note that with our choice of $\mB_1$, we have that $n_1 = 2$, and by our choice of $\mM_i$, we have $n_{i+1} = 2 n_i+1$, for all $i = 1, \ldots, d-1$. 

We consider approximating a discrete $s$-piecewise linear function, which are signals obtained by uniformly sampling a piecewise linear function consisting of $s$ pieces, see Figure~\ref{fig:representation} for an illustration. 

\begin{proposition}
\label{prop:piecewiselin}
Let $\vx \in \reals^{n_d}$ be a discrete $s$-piecewise linear function. Then there is a choice of at most $O(s)$ non-zero parameters $\mC,\va$ such that 
$G(\mC,\va) = \vx$. 
\end{proposition}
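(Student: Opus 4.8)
\emph{Proof proposal.} The plan is to write down an explicit choice of parameters. The structural fact I would build on is that the truncated linear-interpolation upsampling $\mM_i$ is \emph{exact on affine functions}: interpolating the midpoints of a line segment returns the line, so $\mM_i$ maps a sampled affine function to the sampled affine function on the refined grid, and more generally maps any sampled piecewise-linear sequence whose kinks lie on the current grid to its refinement. I would use the first $d-2$ layers merely to grow the grid, taking every $\mC_i$ to be the $2\times2$ identity $\mI_{2\times2}$ and every bias $\va_i=\vect 0$. Since $\mB_1=\mI_{2\times2}$ has nonnegative entries and the ramps produced stay in $[0,1]$, the $\relu$ is inactive throughout these layers, so by induction the two channels of $\mB_{d-1}$ are exactly the global up-ramp $\mathbf u$ and down-ramp $\mathbf v=\vect 1-\mathbf u$ on the grid of size $n_{d-1}$; this costs $O(d)$ nonzero parameters.

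Next I would use layer $d-1$ to create all the breakpoints simultaneously. Give layer $d-1$ exactly $s$ output channels; in each column of $\mC_{d-1}$ put a single nonzero entry selecting the up-ramp channel, so that every channel of $\mM_{d-1}\mB_{d-1}\mC_{d-1}$ refines exactly to $\mathbf u^{(d)}=[0,\tfrac1{n_d-1},\dots,1]$ on the finest grid. Adding the bias $a_q=-\tfrac{p_q-1}{n_d-1}$ and applying $\relu$ makes the $q$-th channel of $\mB_d$ equal to $\tfrac1{n_d-1}\relu(\ell-p_q)$ as a function of the sample index $\ell\in\{1,\dots,n_d\}$, i.e. a one-sided ramp with its kink at the prescribed integer location $p_q$ (with $p_1=1$ this is the un-clipped line $\ell-1$, providing the global slope term). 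The final ingredient is the elementary representation lemma for piecewise-linear sequences: a uniform sampling $\vx$ of an $s$-piece continuous piecewise-linear function has at most $2(s-1)$ changes in its first-difference sequence, hence $\vx_\ell=v_0+\sum_q w_q\relu(\ell-p_q)$ with $O(s)$ terms, where $v_0=\vx_1$, $w_0$ is the first discrete slope, and $w_q$ is the slope increment at $p_q$. Matching this against $G(\mC,\va)=\mB_d\vc_d+a_d$ by setting $(\vc_d)_q=(n_d-1)w_q$ and $a_d=v_0$ completes the construction, with total nonzero budget $O(d)+O(s)=O(s)$ once $d$ (and hence $n_d$) is treated as fixed.

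The part requiring the most care is the interface between the coarse-to-fine grid structure and the locations of the breakpoints: a kink introduced by a $\relu$ at layer $i$ can only be placed on the grid of $\mB_{i+1}$, so generic breakpoints must be created at the last hidden layer, where the grid is the finest — and one has to verify that the affine pre-activation really does refine through all earlier layers with \emph{no} interpolation rounding, which is exactly the ``$\mM_i$ is exact on affine functions'' observation, and that a $\relu$ of such an affine function is a clean ramp with its kink at an integer index and unit slope after scaling. Two smaller points also need checking: a kink of the underlying continuous function that falls strictly between two sample points contributes (at most) two discrete slope changes rather than one, which is why the count is $O(s)$ rather than exactly $s$; and one must confirm that the stated architecture permits the hidden layers to have as many as $\Theta(s)$ channels while $\mB_1$ is still $\mI_{2\times2}$, since the channel-mixing matrices $\mC_i$ are allowed to change the channel count.
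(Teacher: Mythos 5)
Your construction is correct and follows essentially the same route as the paper: propagate a linear ramp exactly through the interpolation/upsampling layers (where the ReLU is inactive), create each breakpoint with a bias followed by a ReLU at the last hidden layer, and form a linear combination of the resulting one-sided ramps plus a global bias. The only differences are refinements in bookkeeping --- you share the ramp-growing layers across all $s$ breakpoint channels (giving $O(d+s)$ rather than the paper's $O(sd)$ nonzero parameters, both $O(s)$ for fixed depth $d$) and you explicitly handle kinks falling strictly between sample points --- neither of which changes the substance of the argument.
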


This proposition follows by noting that with at most $d+1$ many non-zero coefficients, we can represent a discrete rectangular linear function $\vb_d \in \reals^{n_d}$ with slope $\alpha$, that is zero until index $p$, defined as
\[
[\vb_{d} ]_i 
=
\begin{cases}
0 & i = 0, \ldots,p \\
\alpha (i-p) & i = p+1, \ldots, n_d
\end{cases},
\]
see Figure~\ref{fig:representation} for an illustration. 
To see that, note that with $\vc_1 = [0,\alpha]$ and $a_{d-1} = -\alpha p$ we have
\[
\vb_{d}
= 
\relu(a_{d-1} + \relu(\ldots \relu( \mM_2 \relu( \mM_1 \mB_1 \vc_1 ) ) ).
\]
This is a discrete rectangular linear function and only requires $d+1$ non-zero coefficients, as desired.
Forming a linear combination of $s$ such functions plus adding a bias term enables us to represent any discrete $s$-piecewise linear function (again, see Figure~\ref{fig:representation} for an illustration), which proves Proposition~\ref{prop:piecewiselin}.

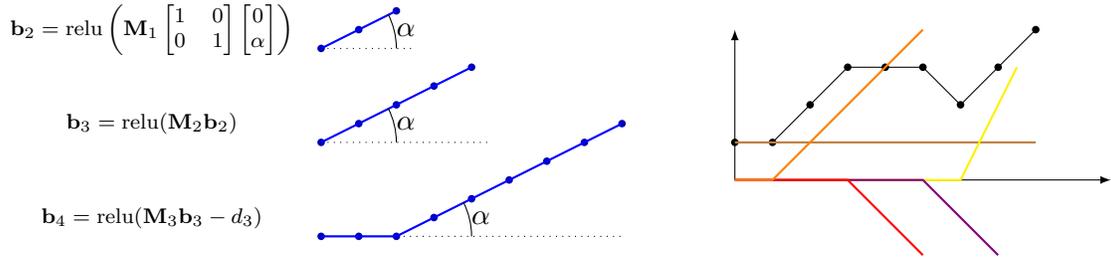
\begin{figure}
\begin{center}
\begin{tikzpicture}[>=latex,scale=0.5]

\begin{scope}
	\draw[dotted] (1,0) -- (3.5,0);
	\fill [DarkBlue] (1,0) circle (0.1);
	\fill [DarkBlue] (2,0.5) circle (0.1);
	\fill [DarkBlue] (3,1) circle (0.1);

	\draw[thick,blue] (1,0) -- (3,1);

	\draw (3,0) arc (0:27:2);
	\node at (3.25,0.5) {$\alpha$};
	
	\node at (-3.5,0.5) {\scriptsize $\vb_2 = \relu \left( \mM_1 \begin{bmatrix}
	1 & 0 \\
	0 & 1 
	\end{bmatrix}
	\begin{bmatrix}
	0 \\
	\alpha
	\end{bmatrix}
	 \right) $}; 
\end{scope}

\begin{scope}[yshift=-2.5cm]
	\draw[dotted] (1,0) -- (5.5,0);
	
	\fill [DarkBlue] (1,0) circle (0.1);
	\fill [DarkBlue] (2,0.5) circle (0.1);
	\fill [DarkBlue] (3,1) circle (0.1);
	\fill [DarkBlue] (4,1.5) circle (0.1);
	\fill [DarkBlue] (5,2) circle (0.1);

	\draw[thick,blue] (1,0) -- (5,2);

	\draw (3,0) arc (0:27:2);
	\node at (3.25,0.5) {$\alpha$};
	\node at (-3.5,0.5) {\scriptsize $\vb_3 = \relu(\mM_2 \vb_2)$};
\end{scope}

\begin{scope}[yshift=-5cm]
	\draw[dotted] (1,0) -- (9,0);

	\fill [DarkBlue] (1,0) circle (0.1);
	\fill [DarkBlue] (2,0) circle (0.1);
	\fill [DarkBlue] (3,0) circle (0.1);
	\fill [DarkBlue] (4,0.5) circle (0.1);
	\fill [DarkBlue] (5,1) circle (0.1);
	\fill [DarkBlue] (6,1.5) circle (0.1);
	\fill [DarkBlue] (7,2) circle (0.1);
	\fill [DarkBlue] (8,2.5) circle (0.1);
	\fill [DarkBlue] (9,3) circle (0.1);

	\draw[thick,blue] (1,0) -- (3,0) -- (9,3);

	\draw (5,0) arc (0:27:2);
	\node at (5.25,0.5) {$\alpha$};
	
	\node at (-3.5,0.5) {\scriptsize $\vb_4 = \relu(\mM_3 \vb_3 - d_3)$};

\end{scope}

\begin{scope}[xshift=11cm,yshift=-3.5cm]
\draw[->] (1,0) -- (11,0);
\draw[->] (1,0) -- (1,4);

\fill [] (1,1) circle (0.1);
\fill [] (2,1) circle (0.1);
\fill [] (3,2) circle (0.1);
\fill [] (4,3) circle (0.1);
\fill [] (5,3) circle (0.1);
\fill [] (6,3) circle (0.1);
\fill [] (7,2) circle (0.1);
\fill [] (8,3) circle (0.1);
\fill [] (9,4) circle (0.1);

\draw (1,1) -- (2,1) -- (4,3) -- (6,3) -- (7,2) -- (8,3) -- (9,4);

\draw[brown,thick](1,1) -- (9,1);
\draw[yellow,thick](1,0) -- (7,0) -- (8.5,3);
\draw[violet,thick](1,0) -- (6,0) -- (8,-2);
\draw[red,thick](1,0) -- (4,0) -- (6,-2);
\draw[orange,thick](1,0) -- (2,0) -- (6,4);
\end{scope}

\end{tikzpicture}
\end{center}
\caption{
\label{fig:representation}
{\bf Left:} Representation of a discrete rectangular function with a $d=4$ layer network.
{\bf Right:} An arbitrary discrete $s$-piecewise linear function (in black) can be approximated with a bias term (horizontal brown line) and a linear combination of $s$ many discrete rectangular functions.
}
\end{figure}


%
%
%
%
%
%



\section{Compressive sensing}

Compressive sensing is the problem of reconstructing an unknown signal $\vx^\ast \in \reals^n$ from $m < n$ linear, typically noisy, measurements
\[
\vy = \mA \vx^\ast + \noise,
\]
where $\mA \in \reals^{m\times n}$ is a known measurement matrix and $\noise \in \reals^m$ is unknown, additive noise.
In order to recover the signal $\vx^\ast$ from the measurement $\vy$, we have to make structural assumptions on the vector; the most common one is to assume that $\vx^\ast$ is sparse in some basis, for example in the wavelet basis. 
Assuming a sparse model works well for a number of imaging applications and is build on a solid theoretical foundation; specifically regularized $\ell_1$-norm minimization provably recovers $\vx^\ast$ from $\vy$ provided certain incoherence assumptions on the measurement matrix $\mA$ are satisfied~\citep{candes_robust_2006}.

More recent work assumes that the vector $\vx^\ast$ lies in the range of a generative prior, i.e., a neural network with fixed weights that were chosen by training on a large dataset of images, and demonstrates that this can perform better than standard $\ell_1$-minimization~\cite{bora_compressed_2017}. However, this relies on training a good generator for a class of images, and fails if there is a discrepancy of test and train images.

Here, we assume that $\vx^\ast$ lies in or close to the range of an un-trained convolutional model (a deep decoder). 
In order to recover the signal from the measurement $\vy$, we solve the optimization problem
\begin{align}
\label{eq:optproblem}
\hat \mC = \arg \min_{\mC} \norm[2]{ \vy - \mA G(\mC)}^2,
\end{align}
and estimate the unknown vector as $\hat \vx = G(\hat \mC)$.
Conceptually this approach is very similar to sparse recovery; here we optimize over $\vx$ in the range of a deep network, and in traditional compressive sensing approaches, optimization is over an $\ell_1$-norm ball.
We use the Adam optimizer to minimize the loss, but gradient descent works similarly well.
Due to the ReLU-nonlinearities, the optimization problem is non-convex and thus Adam or gradient descent might not reach a global optimum.

The advantage of this approach over recent deep learning based approaches 
that either learn an inverse mapping end-to-end~\cite{mousavi_learning_2017}
or assume that the signal lies in the range of a \emph{learned generative prior}~\cite{bora_compressed_2017} is that this approach does not require a pre-trained model. Therefore, our approach is suitable for applications in which little training data is available.


\subsection{
Recovery guarantees for compressive sensing 
\label{sec:recguarantees}
}

Under-parameterization provides a barrier to overfitting: As the next statement shows, an under-parameterized architecture enables recovery from a number of measurement that is on the order of the number of unknowns of the network. 

\begin{theorem}
\label{prop:maincsnew}
Consider a convolutional generator with $d$ layers, $N$ parameters, and input volume obeying $\norm{\mB_0} \leq \xi$. 
Consider a signal $\vx^\ast$, and a corresponding measurement
$
\vy = \mA \vx^\ast + \noise,
$
where $\mA \in \reals^{m \times n}$ is a measurement matrix with i.i.d.~Gaussian entries with zero mean and variance $1/m$. 
Let $\hat \mC$ be parameters that minimize $\norm{\vy - \mA G(\mC)}$ to within an additive factor of $\epsilon$ of the optimum over the ball 
$\setB(\mu) = \{\mC \in \reals^N \colon \norm[2]{\mC} \leq \mu\}$ for some $\mu > 0$
and suppose that the number of measurements obeys, for some slack parameter $\delta > 0$
\[
m  = \Omega( N d \log( d \xi \mu / \delta )).
\]
Then the estimate $G(\hat \mC)$ obeys
\[
\norm[2]{G(\hat \mC) - \vx^\ast}
\leq
6
\min_{\mC^\ast \in \mc B(\mu)} \norm[2]{G(\mC^\ast) - \vx^\ast}
+ 3 \norm[2]{\eta}
+ 2\epsilon
+ 2\delta.
\]
\end{theorem}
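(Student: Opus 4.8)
\emph{Proof proposal.} The plan is to follow the set-restricted isometry strategy of Bora et al.~\cite{bora_compressed_2017}: the randomness of $\mA$ enters only through the claim that, with high probability, $\mA$ approximately preserves Euclidean norms on the low-dimensional set $S - S$, where $S = \{G(\mC) \colon \mC \in \setB(\mu)\}$ is the range of the generator over the parameter ball; once this is in hand the error bound follows from a short deterministic computation. Accordingly the proof splits into three parts: (i) a covering-number bound for $S$, derived from a Lipschitz bound on $\mC \mapsto G(\mC)$; (ii) a Gaussian concentration plus union-bound argument converting the covering bound into the restricted isometry statement; and (iii) the deterministic recovery estimate.

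For part (i), the crucial lemma is that $G$, restricted to $\setB(\mu)$, is $L$-Lipschitz with $\log L = O(d \log(\mathrm{poly}(d,\xi,\mu)))$ (we suppress the dependence on the number of channels $k$, the filter length $\ell$, the output dimension, and $1/\epsilon$). This is where most of the effort goes. One argues layer by layer through~\eqref{eq:onelayerdecorig}: each convolution operator $\mT(\vc)$ has operator norm at most $\norm[1]{\vc} \le \sqrt{\ell}\,\norm[2]{\vc} \le \sqrt{\ell}\,\mu$, each upsampling operator $\mU_i$ has operator norm of order a constant, the maps $\mB_i \mapsto \mB_{i+1}$ before channel normalization are bilinear in the incoming channels and the layer parameters $\mC_i$, and $\relu$ and $\mathrm{sigmoid}$ are $1$-Lipschitz. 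The delicate ingredient is the channel normalization $\cn$, which divides by $\sqrt{\empvar(\cdot)+\epsilon}$: the floor $\epsilon>0$ makes $\cn$ globally Lipschitz with constant $O(1/\sqrt\epsilon)$ and, more importantly, forces every post-normalization channel to have norm at most $\sqrt{n_i}+\mu$ (the offsets $\beta_{ij}$ being among the $\ell_2$-bounded parameters), which bounds the ``input side'' of the bilinear map in the next layer. Multiplying the per-layer factors gives $L \le (\mathrm{poly})^d$, hence $\log L = O(d\log(\mathrm{poly}))$. Since $\setB(\mu)\subset\reals^N$ has a $(\delta_0/L)$-net of cardinality at most $(3\mu L/\delta_0)^N$, its image under $G$ is a $\delta_0$-net of $S$.

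For part (ii), fix $\vv$ and note that $\norm[2]{\mA\vv}^2$ equals $\norm[2]{\vv}^2/m$ times a $\chi^2_m$ variable, so $\Pr[\, |\norm[2]{\mA\vv} - \norm[2]{\vv}| > \alpha\norm[2]{\vv} \,] \le 2 e^{-c\alpha^2 m}$ for any constant $\alpha \in (0,1)$; one also has $\norm{\mA} \le 3\sqrt{n}$ with probability at least $1-e^{-\Omega(n)}$. Taking $\delta_0 = \delta/(12\sqrt n)$, union-bounding the first event over all pairs of net points (at most $(3\mu L/\delta_0)^{2N}$ of them) and over the single fixed vector $G(\mC^\sharp)-\vx^\ast$ attaining the approximation error, and then passing from the net to all of $S-S$ using the operator-norm bound (this passage produces the additive $\delta$), yields: with probability at least $1 - e^{-\Omega(m)}$, for all $\vx_1,\vx_2 \in S$,
\[
(1-\alpha)\norm[2]{\vx_1-\vx_2} - \delta \;\le\; \norm[2]{\mA(\vx_1-\vx_2)} \;\le\; (1+\alpha)\norm[2]{\vx_1-\vx_2} + \delta,
\]
together with $\norm[2]{\mA(G(\mC^\sharp)-\vx^\ast)} \le (1+\alpha)\norm[2]{G(\mC^\sharp)-\vx^\ast}$, provided $c\alpha^2 m \ge 2N\log(3\mu L/\delta_0)$; with the Lipschitz bound this is exactly $m = \Omega(Nd\log(d\xi\mu/\delta))$.

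For part (iii), write $\hat\vx = G(\hat\mC)$, let $\mC^\sharp \in \setB(\mu)$ attain $\sigma := \min_{\mC^\ast \in \setB(\mu)}\norm[2]{G(\mC^\ast)-\vx^\ast}$, and set $\vx^\sharp = G(\mC^\sharp)$. The $\epsilon$-near-optimality of $\hat\mC$ gives $\norm[2]{\vy-\mA\hat\vx} \le \norm[2]{\vy-\mA\vx^\sharp}+\epsilon$; substituting $\vy=\mA\vx^\ast+\noise$ and applying the triangle inequality twice yields $\norm[2]{\mA(\hat\vx-\vx^\sharp)} \le 2\norm[2]{\mA(\vx^\sharp-\vx^\ast)} + 2\norm[2]{\noise} + \epsilon \le 2(1+\alpha)\sigma + 2\norm[2]{\noise}+\epsilon$ by the fixed-vector bound. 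Since $\hat\vx-\vx^\sharp \in S-S$, the lower bound above gives $\norm[2]{\hat\vx-\vx^\sharp} \le \tfrac{1}{1-\alpha}(\norm[2]{\mA(\hat\vx-\vx^\sharp)}+\delta)$, and $\norm[2]{\hat\vx-\vx^\ast} \le \sigma + \norm[2]{\hat\vx-\vx^\sharp}$; choosing $\alpha = 1/3$ and collecting terms gives $\norm[2]{\hat\vx-\vx^\ast} \le 5\sigma + 3\norm[2]{\noise} + \tfrac32\epsilon + \tfrac32\delta$, within the claimed bound. The main obstacle is clearly part (i): establishing the Lipschitz (equivalently, covering) bound for $G$ uniformly over $\setB(\mu)$, in particular (a) handling the division in the channel-normalization layer, and (b) verifying that the Lipschitz constant grows only singly exponentially in the depth $d$, so that $\log L$ is polynomial and the measurement requirement stays at $Nd\cdot\mathrm{polylog}$.
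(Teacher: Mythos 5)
Your proposal is correct and follows essentially the same route as the paper: the paper likewise reduces the theorem to a Lipschitz bound for $\mC \mapsto G(\mC)$ on the parameter ball (obtaining the constant $L = \xi \mu^d d$ by the same layer-by-layer recursion, so that $\log L = O(d\log(d\xi\mu))$) and then invokes Theorem~1.2 of Bora et al.\ as a black box, which packages exactly your parts (ii) and (iii). The only substantive difference is that the paper sidesteps your ``delicate ingredient'' by proving the Lipschitz lemma for a simplified architecture without channel normalization or sigmoid, whereas you sketch how to absorb those layers into the per-layer Lipschitz factors.
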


Theorem~\ref{prop:maincsnew}, proven in the appendix, follows from showing that a convolutional generator is Lipschitz, and combining this with results developed by Bora et al.~\citep{bora_compressed_2017}.

The statement guarantees that the number of measurements \emph{sufficient} for recovery is, up to logarithmic factors, on the order of the parameters of the generator network. This parallels results for sparse recovery which ensure that recovery is possible provided the number of measurements is, up to a logarithmic factor, on the order of the sparsity.
It is also related to the main result from~\cite{bora_compressed_2017} which ensures that recovery is possible provided the number of measurements exceeds a number that depends on the number of input parameters of a generative prior and the number of layers of the generative prior.

Of course, even if $\vx^\ast$ lies in the range of the generator $G$, it is not clear that minimization over the loss $\norm[2]{\vy - \mA  G(\mC) }$ or even the loss $\norm[2]{G(\mC) - \vx^\ast}$ yields a solution that is close to $\vx^\ast$ (i.e., $G(\mC) \approx \vx^\ast$), since the objective is non-convex. Thus, a result stating that the solution of an actual optimization scheme such as gradient descent has the properties stated in the theorem would be desirable, but we can currently only prove such a results for a shallow network.
However, our numerical results show that optimizing the objective with the Adam optimizer or gradient descent works very well, and more importantly, the deep decoder approach to compressive sensing slightly outperforms traditional compressive sensing recovery on real data.

Finally, we note that in concurrent work, the paper~\cite{Jagatap_Hegde_2019} established related results for a projected gradient descent algorithm. The projected gradient algorithms relies on solving, in each iteration, a non-convex problem of the form~\eqref{eq:optproblem} with $\mA=\mI$. Similar to our statement which holds for an algorithm that finds an $\epsilon$-accurate solution of the optimization problem~\eqref{eq:optproblem}, the paper~\cite{Jagatap_Hegde_2019} assumes that~\eqref{eq:optproblem} with $\mA=\mI$ can be solved sufficiently well.


\subsection{Generators with fixed convolutional filters provide smoother images} 

\label{sec:smoother}

While both architectures i and ii with fixed and learned convolutional filters enable concise image representations, fixed convolutional filters impose a stronger smoothness assumption and thus perform empirically better for compressive sensing with random measurement matrices.
To demonstrate this, we choose $\mA \in \reals^{m \times n}$ as a random measurement matrix with iid entries in $\{-1,+1\}$, and choose the undersampling factor as $n/m=3$. 
We estimate an image of dimension $128\times 128$ using architectures i (upsampling and fixed convolutions) and ii (upsampling and parameterized convolutions) as generative models, in both cases with about $3000$ parameters, well below the number of measurements $m = 5461 = 128^2/3$. As can be seen in Figure~\ref{fig:csex}, the decoder architecture with parameterized filters produces noise-like artifacts, while the architecture with fixed filters generates smoother images (due to the fixed upsampling kernels).
We found this effect to be even more pronounced with larger undersampling factors, and therefore focus on architecture i in the following.

\begin{figure}
\begin{center}
\begin{tikzpicture}[scale = 1]

\newcommand\xspace{3.2cm}
\newcommand\yspace{3.3cm}
\newcommand\ymargin{1cm}
\newcommand\xmargin{1cm}
\newcommand\ycap{-4cm}
\newcommand\ycapp{-8cm}
\newcommand\iwidth{2.6cm}

\node at (-0.5*\xspace,-2*\yspace) {\includegraphics[width=\iwidth]{./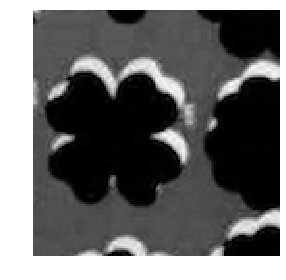}};
\node at (0.5*\xspace,-2*\yspace) {\includegraphics[width=\iwidth]{./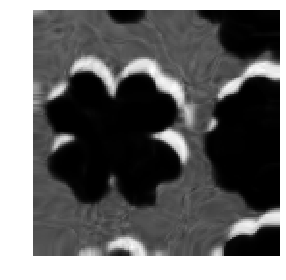}};
\node at (1.5*\xspace,-2*\yspace) {\includegraphics[width=\iwidth]{./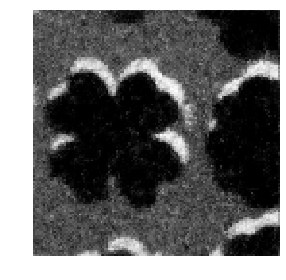}};

\node at (-0.5*\xspace,\ycapp) {\parbox{4cm}{\centering \small original}};
\node at (0.5*\xspace,\ycapp) {\parbox{4cm}{\centering \small i) fixed filters}};
\node at (1.5*\xspace,\ycapp) {\parbox{4cm}{\centering \small ii) parameterized filters}};

\end{tikzpicture}

\end{center}

\vspace{-0.5cm}

\caption{
\label{fig:csex}
Compressive sensing with a random measurement matrix and undersampling factor $n/m = 3$ for recovery of a $128\times 128$ image. 
The number of parameters of the decoder architectures i and ii are about $3000$, well below the ambient dimensions of the images, $128^2$.
}
\end{figure}


\subsection{Is the number of parameters a good measure on the number of required measurements?}

Theorem~\ref{prop:maincsnew} states that on the order of the number of parameters of a generator are sufficient for recovery, but is it also necessary?
Here we demonstrate that there are (random) signals in the range of the generator that require more measurements than parameters of the model for good estimation, indicating that in general the number of parameters is a good measure of the complexity of the model.
On the other hand, we also show that compressive sensing on real images works well even if the number of parameters of the decoder is larger than the number of measurements, indicating that convolutional generators regularize beyond what is explained by the number of parameters.


We focus on architecture i for the remainder of this paper, as we found it in Section~\ref{sec:smoother} to perform best. 
We generate random images in the range of the generator with varying number of parameters $N$, and recover them from $m$ linear random measurements. 
In order to generate an image, we can in principle simply choose its coefficients at random. However, 
this tends to generate `simple' images, in that a network with much fewer coefficients can represent them well. To ensure that we get `complex' or detailed images, we generate an image in the range of the generator by finding the best representation of noise for a fixed number of parameters of the model.
Figure~\ref{fig:plots}(a) shows the normalized mean square reconstruction errors for different choices of the number of parameters. As expected, for a larger number of parameters, $N$, the number of measurements $m$ needs to be larger for the recovery error to be small. 

Next, we consider recovery of natural images---we consider a detailed image and a simple image with little detail---and perform compressive sensing recovery with different number of parameters as well as with varying number of measurements. 
As expected, the simple image requires fewer measurements for successful reconstruction.
Also as expected, for a given number of measurements (take $m=200$),
if the number of parameters is too small, the approximation error is large and dominates; if the number of parameters is too large, the model provides little regularization and overfits. Thus, the best performance is obtained if the number of parameters is sufficiently small relative to $m$. 
Contrary to standard compressive sensing methods (such as sparse models), even when the model is overparameterized, recovery performs well. 
This indicates, that the recovery results in Section~\ref{sec:recguarantees} are very conservative when applied with natural images in mind.

\begin{figure}
\begin{center}
\begin{tikzpicture}[scale=0.95]

\node at (1.7cm,3.9cm) {\includegraphics[width=1.9cm]{./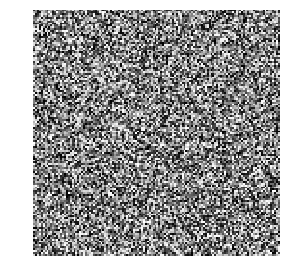}};

\node at (6.1cm,3.9cm) {\includegraphics[width=1.9cm]{./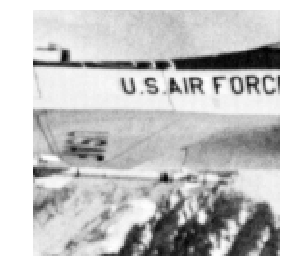}};

\node at (10.7cm,3.9cm) {\includegraphics[width=1.9cm]{./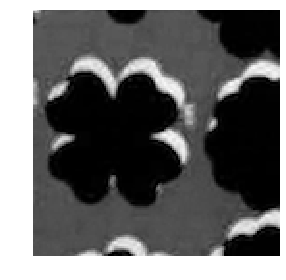}};

\begin{groupplot}[
xmode=log,
yticklabel style={
        /pgf/number format/fixed,
        /pgf/number format/precision=5
},
scaled y ticks=false,
         title style={at={(0.5,-0.45)},anchor=north},
         group style={group size=3 by 1, horizontal sep=1.0cm},
         width=0.31\textwidth, 
         legend style={at={(1.8,1)}},
         ]

\nextgroupplot[xlabel={$m$}, thick,ylabel={MSE},title={(a) rec. of rand. signal}]

\addplot +[mark=noe,red] table[x index=0,y index=1]{./dat/csrandimg.csv};

\addplot +[mark=noe,orange] table[x index=0,y index=2]{./dat/csrandimg.csv};

\addplot +[mark=noe,yellow] table[x index=0,y index=3]{./dat/csrandimg.csv};

\addplot +[mark=noe,green] table[x index=0,y index=4]{./dat/csrandimg.csv};

\addplot +[mark=noe,blue] table[x index=0,y index=5]{./dat/csrandimg.csv};


\nextgroupplot[xlabel={$m$}, thick,title={(b) CS with detailed img}]

\addplot +[mark=noe,red] table[x index=0,y index=1]{./dat/csf16img.csv};

\addplot +[mark=noe,orange] table[x index=0,y index=2]{./dat/csf16img.csv};

\addplot +[mark=noe,yellow] table[x index=0,y index=3]{./dat/csf16img.csv};

\addplot +[mark=noe,green] table[x index=0,y index=4]{./dat/csf16img.csv};

\addplot +[mark=noe,blue] table[x index=0,y index=5]{./dat/csf16img.csv};


\nextgroupplot[xlabel={$m$}, thick,title={(c) CS with simple img}]

\addplot +[mark=noe,red] table[x index=0,y index=1]{./dat/csf16img_poster.csv};
\addlegendentry{\scriptsize $N=410$}

\addplot +[mark=noe,orange] table[x index=0,y index=2]{./dat/csf16img_poster.csv};
\addlegendentry{\scriptsize $N=1620$}

\addplot +[mark=noe,yellow] table[x index=0,y index=3]{./dat/csf16img_poster.csv};
\addlegendentry{\scriptsize $N=3630$}

\addplot +[mark=noe,green] table[x index=0,y index=4]{./dat/csf16img_poster.csv};
\addlegendentry{\scriptsize $N=10050$}

\addplot +[mark=noe,blue] table[x index=0,y index=5]{./dat/csf16img_poster.csv};
\addlegendentry{\scriptsize $N=90250$}

\end{groupplot}
\end{tikzpicture}
\end{center}
\vspace{-0.6cm}
\caption{\label{fig:plots}
{\bf (a):} MSE for reconstruction of random images that lie in the range of an $N$-dimensional generator: As the theory suggests, the more measurements relative to $N$, the better reconstruction.
{\bf (b), (c):} Compressive sensing 
of a detailed (b) and a simple (c) image:
As expected, the simple image requires fewer measurements for successful reconstruction.
Also as expected, 
for a given number of measurements (take $m=200$),
if the number of parameters is too small, the approximation error is large; if it is too large, the model overfits, and the best performance is obtained if the number of parameters is sufficiently small relative to $m$. 
Contrary to standard compressive sensing results,  even when the model is overparameterized, recover performs well.
} 
\end{figure}


\subsection{Compressive sensing for MRI}

We next consider reconstructing an MRI image from few measurements. 
We focus on the architecture with fixed filters (i.e., the original architecture from~\cite{heckel_deep_2019}), since we have found this architecture to perform better for MRI reconstruction, compared to the architecture ii with parameterized filters. 

MRI is a medical imaging technique where magnetic fields are applied by a machine, and those fields induce the body part to be imaged to emit electromagnetic response fields that are measured by a receiver coil. 
Measurements correspond to points along a path through a two-dimensional Fourier space representation of the body part to be imaged, known as k-space. By taking a sequence of samples that tile the space up to some maximum frequency, an MRI machine can capture the full Fourier-space representation of a region, denoted by $\tilde \vy$. 
From the full Fourier-space respresentation, an image can be recovered by performing an inverse Fourier transform as $\vy = \inv{\mF} \tilde \vy$, where $\mF$ is the (2d) discrete Fourier transform matrix.
However, the number of samples captured in k-space is a limiting factor of MRI, and therefore it is common practice to accelerate the imaging process by undersampling the signal via omitting some of the samples. 
The problem is then to recover an image from the measurement $\mM \tilde \vy$, where $\mM$ corresponds to applying a mask in k-space, see Figure~\ref{fig:MRI} for an illustration of the mask corresponding to sub-sampling by a factor of $8$.

Thus, recovery from undersampled MRI measurements is a compressive sensing problem with the measurement matrix given by $\mA = \mM \mF$.
In order to evaluate the performance for this task, we consider the fastMRI dataset recently released by facebook~\cite{zbontar_fastMRI_2018}. Specifically, we consider the single coil validation dataset, and reconstruct images by regularizing with architecture i (the deep decoder) by solving~\eqref{eq:optproblem} using gradient descent.
We compare performance to recovery via least-squares as well as recovery with $\ell_1$-wavelet minimization and total variation (TV) regularization, both baseline compressive sensing methods.
See Figure~\ref{fig:MRI} for the corresponding results on an example image.
The results show that regularization with a un-trained convolutional network outperforms $\ell_1$-minimization and TV-minimization.
We repeated this experiment for 100 of the validation images from~\cite{zbontar_fastMRI_2018} and found an improvement of about 1dB on average over all images (see table~\ref{tab:comparison} below). In addition, it can be seen that the images reconstructed by the convolutional network look sharper than even the least squares reconstruction from the full measurements.

\begin{table}[h!]
\begin{center}
\begin{tabular}{ll*{10}{c}r}
deep decoder k=128 & 34.51 dB \\
deep decoder k=256 & 34.74 dB \\
deep decoder k=512 & 34.60 dB \\
$\ell_1$-minimization & 33.20 dB \\
total variation & 33.37 dB
\end{tabular}
\end{center}
\caption{
\label{tab:comparison}
Performance comparison of decoder architectures for MRI reconstruction (performance is averaged over 100 MRI images).
Slight overparameterization ($k=256$) but not too much works best for this task and outperforms $\ell_1$ and  TV-norm regularization.
}
\end{table}

\begin{figure}
\begin{center}
\begin{tikzpicture}

\newcommand\xspace{2.5cm}
\newcommand\yspace{3.4cm}
\newcommand\ymargin{1cm}
\newcommand\xmargin{1cm}
\newcommand\ycap{-4.8cm}
\newcommand\ycapp{-8.4cm}
\newcommand\iwidth{2.8cm}

\node at (0*\xspace,-1*\yspace) {\includegraphics[width=\iwidth]{./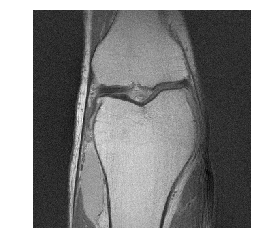}};
\node at (1*\xspace,-1*\yspace) {\includegraphics[width=\iwidth]{./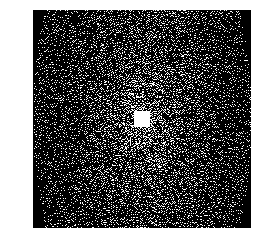}};
\node at (2*\xspace,-1*\yspace) {\includegraphics[width=\iwidth]{./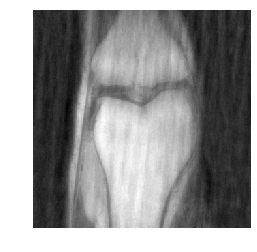}};
\node at (0*\xspace,\ycap) {full rec.};
\node at (1*\xspace,\ycap) {mask};
\node at (2*\xspace,\ycap) {LS};

\node at (-0.5*\xspace,-2*\yspace) {\includegraphics[width=\iwidth]{./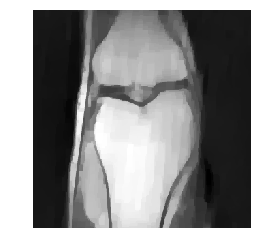}};
\node at (0.5*\xspace,-2*\yspace) {\includegraphics[width=\iwidth]{./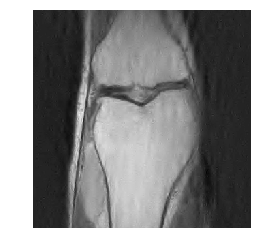}};
\node at (1.5*\xspace,-2*\yspace) {\includegraphics[width=\iwidth]{./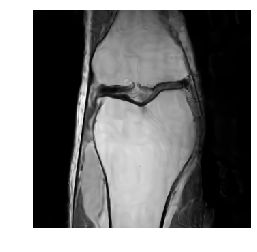}};
\node at (2.5*\xspace,-2*\yspace) {\includegraphics[width=\iwidth]{./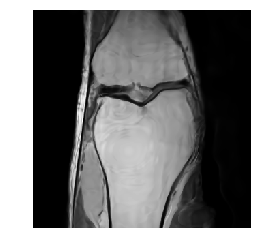}};

\node at (-0.5*\xspace,\ycapp) {\parbox{4cm}{\centering TV \\ 28.60dB}};
\node at (0.5*\xspace,\ycapp) {\parbox{4cm}{\centering L1-Wav \\ 28.84dB}};
\node at (1.5*\xspace,\ycapp) {\parbox{4cm}{\centering i) k=128 \\ 30.26dB}};
\node at (2.5*\xspace,\ycapp) {\parbox{4cm}{\centering i) k=256 \\ 30.39dB}};

\end{tikzpicture}
\end{center}
\vspace{-0.4cm}

\caption{
\label{fig:MRI}
An application architecture i for MRI reconstruction. 
Regularization with convolutional generators outperforms $\ell_1$-minimization and total-variation norm minimization, the standard reconstruction algorithms for untrained MRI reconstruction. 
Notably, the method does not overfit even if the model is overparameterized (i.e., when the number of channels, $k$, is large).
}
\end{figure}

\section*{Code}
Code to reproduce the experiments is available at \href{https://github.com/reinhardh/signalecovery\_decoder}{github.com/reinhardh/signal\_recovery\_decoder}.

\section*{Acknowledgements}
RH is partially supported by NSF award IIS-1816986, and would like to thank Paul Hand for helpful discussions on the approximation capabilities of the deep decoder.


\printbibliography




\appendix


\section{Proof of Theorem~\ref{prop:maincsnew}}

For notational simplicity, we prove the results for architecture i), for which the relation between layers becomes
\[
\mB_{i+1} 
=
\relu(\mU_i' \mB_i \mC_i).
\]
where the operator $\mU_i' =  \mT( (1/4) [1,2,1])\mU_i$ implements (scaled) linear upsampling.
The theorem is based

The proof is based on the following lemma by~\citep{bora_compressed_2017}. 

\begin{lemma}[{\citep[Thm.~1.2]{bora_compressed_2017}}]
\label{lem:dimakis}
Let $G\colon \reals^N \to \reals^n$ be $L$-Lipschitz and let 
$\setB(r) = \{\mC \in \reals^N \colon \norm[2]{\mC} \leq r\}$ be an $\ell_2$-norm ball.
Let $\mA \in \reals^{m \times N}$ be a random Gaussian matrix with i.i.d.~$\mc N(0,1/m)$ distributed entries and consider a noisy measurement $\vy = \mA \vx^\ast + \noise$, where $\vx^\ast \in \reals^n$ is a fixed signal. If $m = O(N \log( Lr / \delta ))$, then, with probability at least 
$1 - e^{-O(m)}$,
the parameters $\hat \mC$ that minimize $\norm{\vy - \mA G(\mC)}$ to within an additive $\epsilon$ of the optimum over the ball $\setB(r)$ obeys
\[
\norm[2]{G(\hat \mC) - \vx^\ast}
\leq
6
\min_{\mC^\ast \in \mc B(r)} \norm[2]{G(\mC^\ast) - \vx^\ast}
+ 2\epsilon
+ 2\delta.
\]
\end{lemma}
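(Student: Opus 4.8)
The plan is to reduce the statement to a \emph{set-restricted isometry} property of the Gaussian matrix $\mA$ over the range of the generator, following Bora et al., and then convert that property into the recovery bound by a short optimality comparison. First I would say that $\mA$ satisfies the set-restricted eigenvalue condition with parameters $\gamma \in (0,1)$ and $\delta > 0$ over a set $S \subseteq \reals^n$ if $\norm[2]{\mA(\vx_1 - \vx_2)} \geq \gamma \norm[2]{\vx_1 - \vx_2} - \delta$ for all $\vx_1, \vx_2 \in S$. Essentially all of the probabilistic content of the lemma sits in showing that, when $m = O(N \log(L r/\delta))$, a Gaussian $\mA$ satisfies this condition over $S = G(\setB(r))$ with probability $1 - e^{-O(m)}$.

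To establish the condition I would run a covering argument. Take a $(\delta/L)$-net $\setN$ of the parameter ball $\setB(r) \subset \reals^N$; a volumetric estimate gives $\log |\setN| = O(N \log(L r/\delta))$. Because $G$ is $L$-Lipschitz, $G(\setN)$ is a $\delta$-net of $S$ in the output space. For each \emph{fixed} difference $\vz = \vx_1 - \vx_2$ with $\vx_1, \vx_2 \in G(\setN)$, the quantity $\norm[2]{\mA \vz}^2$ is a scaled chi-squared variable with mean $\norm[2]{\vz}^2$, so a Gaussian tail inequality gives $\norm[2]{\mA \vz} \geq (1 - \alpha)\norm[2]{\vz}$ with failure probability $e^{-\Omega(\alpha^2 m)}$. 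A union bound over the at most $|\setN|^2$ pairs then succeeds provided $m = \Omega(N \log(L r/\delta)/\alpha^2)$, which matches the hypothesis for constant $\alpha$. Finally I would transfer the bound from the net to all of $S$: for arbitrary $\vx_1,\vx_2 \in S$ I approximate by nearest net images and control the residual by taking the net slightly finer together with a high-probability bound on the spectral norm $\norm[2]{\mA}$, folding the approximation error into the additive slack $\delta$.

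Given the set-restricted condition the recovery bound is deterministic. Writing $\bar\vx = G(\mC^\ast)$ for the best in-range approximation and $\hat\vx = G(\hat\mC)$, the $\epsilon$-optimality of $\hat\mC$ together with $\vy = \mA \vx^\ast + \noise$ and two applications of the triangle inequality yields $\norm[2]{\mA(\hat\vx - \bar\vx)} \leq 2\norm[2]{\mA(\bar\vx - \vx^\ast)} + 2\norm[2]{\noise} + \epsilon$. I would lower-bound the left-hand side by the set-restricted condition, $\norm[2]{\mA(\hat\vx - \bar\vx)} \geq (1-\alpha)\norm[2]{\hat\vx - \bar\vx} - \delta$, and upper-bound $\norm[2]{\mA(\bar\vx - \vx^\ast)} \leq (1+\alpha)\norm[2]{\bar\vx - \vx^\ast}$ via a single-vector Gaussian tail bound, valid because $\bar\vx - \vx^\ast$ is a fixed vector independent of $\mA$. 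Solving for $\norm[2]{\hat\vx - \bar\vx}$ and adding $\norm[2]{\bar\vx - \vx^\ast}$ gives a bound of the form $c_1 \min_{\mC^\ast \in \setB(r)} \norm[2]{G(\mC^\ast) - \vx^\ast} + c_2 \norm[2]{\noise} + c_3 \epsilon + c_4 \delta$; choosing $\alpha$ a suitable constant calibrates $c_1 = 6$ and reduces the $\epsilon$- and $\delta$-terms to $2\epsilon + 2\delta$, the noise contribution being the $3\norm[2]{\noise}$ term carried through in Theorem~\ref{prop:maincsnew}.

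The main obstacle is the covering step, since it is the only place where the parameter count $N$ and the Lipschitz constant $L$ enter the sample complexity, and the net resolution $\delta/L$, the per-pair concentration exponent $\alpha^2 m$, and the net cardinality $|\setN|^2$ must be balanced to produce exactly $m = O(N \log(L r/\delta))$. The most delicate piece is the passage from the finite net to the continuous set $S$: bounding the contribution of points lying between net points without inflating the additive error beyond $\delta$ requires a high-probability control of $\norm[2]{\mA}$ that holds uniformly rather than for a single fixed vector, and it is here that the hidden logarithmic factors in the sample complexity originate.
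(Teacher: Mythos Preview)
The paper does not actually prove this lemma: it is quoted verbatim as Theorem~1.2 of Bora et al.\ and used as a black box, the paper's own contribution being the Lipschitz estimate (Lemma~\ref{lem:lipschitz}) that is plugged into it. So there is no ``paper's own proof'' to compare against here; what you have written is a correct high-level reconstruction of the argument in the cited source. Your two-stage plan --- a covering/union-bound argument to establish the set-restricted eigenvalue condition over $G(\setB(r))$, followed by a deterministic optimality comparison to extract the error bound --- is exactly the structure of Bora et al.'s proof, and the paper's Appendix~B in fact restates the SREC definition and the deterministic lemma (their Lemma~4.3) that your second stage reproduces.

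One small remark: you correctly carry a $3\norm[2]{\noise}$ term through the deterministic step, and indeed that term is present in Bora et al.'s original statement and in the paper's Theorem~\ref{prop:maincsnew}; its absence from the lemma as transcribed in the paper appears to be an omission in the paper, not a feature of the result. Your identification of the net-to-continuum passage as the delicate point is also accurate, though in practice the control there is milder than you suggest: a single high-probability bound $\norm[2]{\mA \vw} \leq 2\norm[2]{\vw}$ on each of the $O(|\setN|)$ residual vectors (rather than a uniform operator-norm bound) suffices, and this is absorbed into the same union bound.
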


Consider the original deep decoder, where we have
\[
\mB_{i+1} 
=
\relu(\mU_i \mB_i \mC_i).
\]
Suppose the coefficients are bounded, specifically consider the set of $\mu$-bounded coefficients
\[
\setB_\mu
= 
\{\mC = \{\mC_0,\ldots,\mC_{d-1}, \vc_d\}
\in \reals^{k\times k} \times \ldots \times \reals^{k\times k} \times \reals^k
|
\norm[F]{\mC_i} \leq \mu \}.
\]
\begin{lemma}
\label{lem:lipschitz}
Consider a deep decoder with $\norm{\mB_0} \leq \xi$ and $\norm{\mU_i} \leq 1$.
On the set of $\mu$-bounded coefficient vectors, the deep decoder is $\xi \mu^dd$-Lipschitz, i.e., for all $\mC,\mC' \in B_\mu$, we have that
\[
\norm[2]{G(\mC) - G(\mC')}
\leq
\xi \mu^d
d
\left(
\sum_{i=0}^{d} 
\norm[F]{\mC_i' - \mC_i}^2
\right)^{1/2}.
\]
\end{lemma}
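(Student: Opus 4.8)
The plan is a straightforward induction on depth, carried out in Frobenius norm for the feature maps. First I would record the elementary ingredients: the entrywise $\relu$ is $1$-Lipschitz in Frobenius norm with $\relu(\mathbf{0})=\mathbf{0}$ (it is \emph{not} operator-norm non-expansive, e.g.\ applying $\relu$ entrywise to a scaled Hadamard matrix strictly increases its spectral norm — this is precisely why the induction must track $\norm[F]{\mB_i}$ rather than $\norm{\mB_i}$); the mixed submultiplicativity $\norm[F]{PQ}\leq \min(\norm{P}\norm[F]{Q},\,\norm[F]{P}\norm{Q})$ for matrices of compatible size; and $\norm{P}\leq\norm[F]{P}$. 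For $\mC\in\setB_\mu$ the latter gives $\norm{\mC_i}\leq\norm[F]{\mC_i}\leq\mu$ and $\norm[2]{\vc_d}\leq\mu$, and by hypothesis $\norm{\mU_i}\leq 1$.

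Second, I would bound the magnitude of each feature map. From $\mB_{i+1}=\relu(\mU_i\mB_i\mC_i)$, using $\relu(\mathbf{0})=\mathbf 0$, $1$-Lipschitzness, $\norm{\mU_i}\leq 1$, $\norm{\mC_i}\leq\mu$, one gets $\norm[F]{\mB_{i+1}}\leq\mu\norm[F]{\mB_i}$, hence by induction from $\norm[F]{\mB_0}\leq\xi$,
\[
\norm[F]{\mB_i}\leq\xi\mu^i,\qquad i=0,\dots,d.
\]
Third — the core step — I would fix $\mC,\mC'\in\setB_\mu$, write $\mB_i,\mB_i'$ for the corresponding feature maps (with $\mB_0=\mB_0'$ since the input is fixed), split $\mB_i\mC_i-\mB_i'\mC_i' = (\mB_i-\mB_i')\mC_i + \mB_i'(\mC_i-\mC_i')$, and apply $1$-Lipschitzness of $\relu$, $\norm{\mU_i}\le 1$, the mixed submultiplicativity, and the bound $\norm[F]{\mB_i'}\leq\xi\mu^i$ to obtain the one-layer recursion
\[
\norm[F]{\mB_{i+1}-\mB_{i+1}'}\leq \mu\,\norm[F]{\mB_i-\mB_i'} + \xi\mu^i\,\norm[F]{\mC_i-\mC_i'}.
\]
Unrolling this from $\norm[F]{\mB_0-\mB_0'}=0$ yields $\norm[F]{\mB_d-\mB_d'}\leq \xi\mu^{d-1}\sum_{i=0}^{d-1}\norm[F]{\mC_i-\mC_i'}$, where the geometric weights $\mu^{d-1-i}\cdot\mu^{i}$ telescope to the uniform factor $\mu^{d-1}$.

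Fourth, I would treat the output layer $G(\mC)=\mB_d\vc_d$ by the same splitting $\mB_d\vc_d-\mB_d'\vc_d' = (\mB_d-\mB_d')\vc_d + \mB_d'(\vc_d-\vc_d')$, bounding the first term by $\norm[F]{\mB_d-\mB_d'}\,\norm[2]{\vc_d}\leq \xi\mu^d\sum_{i=0}^{d-1}\norm[F]{\mC_i-\mC_i'}$ and the second by $\norm[F]{\mB_d'}\,\norm[2]{\vc_d-\vc_d'}\leq\xi\mu^d\norm[2]{\vc_d-\vc_d'}$, which together give $\norm[2]{G(\mC)-G(\mC')}\leq\xi\mu^d\sum_{i=0}^{d}\norm[F]{\mC_i-\mC_i'}$ (writing $\mC_d:=\vc_d$). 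A single Cauchy--Schwarz step turns the sum of $d+1$ terms into $\sqrt{d+1}\,\big(\sum_{i=0}^d\norm[F]{\mC_i-\mC_i'}^2\big)^{1/2}$, and $\sqrt{d+1}\leq d$ for $d\geq 2$ produces the claimed $\xi\mu^d d$-Lipschitz bound. There is no substantive obstacle: the argument is a routine telescoping induction, and the only points demanding care are (i) consistently using Frobenius norm for the $\mB_i$ (since $\relu$ is not operator-norm contractive) while invoking $\norm{\cdot}\leq\norm[F]{\cdot}$ to pass between the two, and (ii) the bookkeeping in the unrolled sum.
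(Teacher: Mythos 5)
Your proposal is correct and follows essentially the same route as the paper's proof: the add-and-subtract splitting $\mB_i\mC_i-\mB_i'\mC_i'=(\mB_i-\mB_i')\mC_i+\mB_i'(\mC_i-\mC_i')$, the Frobenius-norm bound $\norm[F]{\mB_i}\leq\xi\mu^{i}$ on the feature maps, the telescoped one-layer recursion, and a final norm-equivalence step converting the sum of Frobenius norms into the stated $\ell_2$ form. Your explicit remark that $\relu$ is $1$-Lipschitz in Frobenius but not operator norm is a useful clarification of a point the paper leaves implicit, but the argument is the same.
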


The lemma guarantees that the network is $\xi \mu^d d$-Lipschitz on the set of coefficient vectors with $\ell_2$-norm bounded by $\mu$. 
Application of this fact in Lemma~\ref{lem:dimakis} concludes the proof.

\begin{proof}[Proof of Lem.~\ref{lem:lipschitz}]
First note that
\begin{align}
\norm[2]{G(\mC) - G(\mC')}
&=
\norm[2]{ \mB_d \vc_d - \mB_d' \vc_d'} \nonumber \\
&\leq
\norm{\mB_d}
\norm[2]{\vc_d - \vc_d'}
+
\norm{\mB_d - \mB_d'}
\norm[2]{\vc_d}.
\label{eq:ineqfirst}
\end{align}
We start by upper bounding $\norm{\mB_d}$. 
With $\norm{\mU_i} \leq 1$, we have that
\[
\norm[F]{\mB_{i+1}}
\leq 
\norm[F]{\mU_i \mB_i \mC_i }
\leq
\norm[F]{ \mB_i }
\norm{\mC_i}
\leq \norm[F]{ \mB_i } \mu.
\]
This implies that
$\norm[F]{\mB_{i}} \leq \xi \mu^{i-1}$.

We next upper bound 
$\norm{\mB_d - \mB_d'}$. 
Towards this goal, as shown below we have that
\begin{align}
\label{eq:boundBBi}
\norm[F]{\mB_{i+1} - \mB_{i+1}' }
&=
\norm{\mB_i'} \norm[F]{\mC_i' - \mC_i}
+
\norm{\mC_i} \norm[F]{\mB_i' - \mB_i}.
\end{align}
Applying this inequality recursively, we obtain 
with $\norm{\mC_i} \leq \mu$ and $\norm{\mB_0} \leq \xi$ that
\begin{align}
\norm[F]{\mB_{d} - \mB_{d}' }
\leq
\xi \mu^{d-1}
\left(
\sum_{i=1}^{d-1} 
\norm[F]{\mC_i' - \mC_i}
\right)
\end{align}
Application in inequality~\eqref{eq:ineqfirst} yields 
\[
\norm[2]{G(\mC) - G(\mC')}
\leq
\xi \mu^d
\left(
\sum_{i=1}^{d} 
\norm[F]{\mC_i' - \mC_i}
\right).
\]
Finally, using that, for $\vx \in \reals^d$, $\norm[1]{\vx} \leq d \norm[2]{\vx}$ proves the statement. 

It remains to proof equation~\eqref{eq:boundBBi}:
\begin{align*}
\norm[F]{\mB_{i+1} - \mB_{i+1}' }
&=
\norm[F]{\relu(\mU_i \mB_i' \mC_i') - \relu(\mU_i \mB_i \mC_i) } \\
&\leq
\norm[F]{ \mU_i \mB_i' \mC_i' - \mU_i \mB_i \mC_i } \\
&=
\norm[F]{ \mU_i \mB_i' \mC_i'  
- \mU_i \mB_i' \mC_i
+ \mU_i \mB_i' \mC_i
 - \mU_i \mB_i \mC_i } \\
&\leq
\norm[F]{ \mU_i \mB_i' (\mC_i' - \mC_i)}
+
\norm[F]{\mU_i (\mB_i' - \mB_i) \mC_i} \\
&\leq
\norm{\mU_i}
\left( \norm{\mB_i'} \norm[F]{\mC_i' - \mC_i}
+
\norm{\mC_i} \norm[F]{\mB_i' - \mB_i}
\right).
\end{align*}

\end{proof}



\section{Additional recovery results
}

In this section, we prove additional recovery results for the special case of a one-layer network that are based on a subspace counting argument, as opposed to a Lipschitz-function based argument.

We consider a one-layer network. For simplicity, we consider a one-dimensional version, and ignore the normalization operation. Then, the networks output is given by
\begin{align}
\label{eq:onelayerdec}
G(\mC)
= 
\relu
\left( \left[
\sum_{j=1}^k \mT(\vc_{11j}) \mU_1 \vb_{1j}, 
\ldots,
\sum_{j=1}^k \mT(\vc_{1kj}) \mU_1 \vb_{1j}
\right] \right) \vc_2, 
\end{align}
where $\vc_{1ij} \in \reals^\ell$ are the convolutional filters and
$\mT(\vc)$ is the circulant matrix implementing the convolution operation.
For example for the case that $n = 5$ and $\ell=3$, the convolution matrix is given by
\begin{align}
\label{eq:Tc}
\mT(\vc) 
= 
\begin{bmatrix}
c_1 &  c_2 & c_3 & 0 & 0 \\
0    & c_1 & c_2 & c_3 & 0 \\ 
0  & 0 &  c_1 & c_2 & c_3 \\ 
c_3 & 0  & 0 &  c_1 & c_2  \\ 
c_2 & c_3 & 0  & 0 &  c_1  
\end{bmatrix}.
\end{align}

The following statement is our main result for recovery from few measurements.

\begin{theorem}
\label{prop:maincs}
Consider an image $\vx^\ast$, and a corresponding measurement
$
\vy = \mA \vx^\ast + \noise,
$
where $\mA \in \reals^{m \times n}$ a Gaussian random projection matrix 
with iid Gaussian entries with zero mean and variance $1/m$ with 
\[
m  = 
\begin{cases}
\Omega( \ell k^2  \log(n)) & \text{ if the filters of $G$ are parameters} \\
\Omega( k^2  \log(n)) & \text{ if the filters of $G$ are fixed}.
\end{cases}
\]
Consider a deep decoder $G(\mC)$ with one layer (see equation~\eqref{eq:onelayerdec}), and let $\hat \mC$ minimize $\norm[2]{\vy - \mA  G(\mC) }$ over $\mC$ to within an additive $\epsilon$ of the optimum.
Then, with probability at least $1 - e^{-\Omega(m)}$ over the random projection matrix,
\[
\norm[2]{G(\hat \mC) - \vx^\ast}
\leq
6
\min_{\mC} \norm[2]{G(\mC) - \vx^\ast} + (3/2) \norm[2]{\noise} + 2 \epsilon.
\]
\end{theorem}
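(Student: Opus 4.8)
The plan is to avoid the Lipschitz-and-net argument used for Theorem~\ref{prop:maincsnew}, and instead give an exact \emph{union-of-subspaces} description of $\mathrm{Range}(G)$ for the one-layer network~\eqref{eq:onelayerdec}, and then invoke the same covering-number-plus-union-bound machinery that Bora et al.~\citep{bora_compressed_2017} use to prove compressive-sensing recovery of sparse vectors --- only with their $\binom{n}{k}$ coordinate subspaces replaced by ours.

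\emph{Step 1 (the crux): a subspace description of $\mathrm{Range}(G)$.}
Before the ReLU, the $i$-th channel in~\eqref{eq:onelayerdec} is $\sum_{j=1}^k \mT(\vc_{1ij})\mU_1\vb_{1j}$. Since $\mB_1$ is fixed and $\vc\mapsto\mT(\vc)\vu$ is linear with range of dimension at most $\ell$ for fixed $\vu$, each pre-ReLU channel lies in a fixed subspace $W$ of $\reals^n$ of dimension $D_1\le \ell k$ when the filters are parameters (spanned by the length-$\ell$ shifts of the $\mU_1\vb_{1j}$, $j\in[k]$), and of dimension $D_1\le k$ when the filters are fixed (spanned by $\mT(\vc^{\mathrm{fix}})\mU_1\vb_{1j}$, $j\in[k]$). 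For $w\in W$ we have $\relu(w)=\mathrm{diag}(s)w$ with $s$ the $0/1$ sign pattern of $w$, and the number of sign patterns realized on a $D_1$-dimensional subspace of $\reals^n$ is at most the number of regions of an arrangement of $n$ hyperplanes in $\reals^{D_1}$, hence at most $(n+1)^{D_1}$. Finally $G(\mC)$ is the $\vc_2$-weighted combination of the post-ReLU channels; absorbing each scalar $[\vc_2]_i$ into its channel, $G(\mC)=\sum_{i=1}^k\mathrm{diag}(s_i)\tilde w_i$ with $\tilde w_i\in W$ arbitrary. Therefore $\mathrm{Range}(G)$ is contained in $\bigcup_{(s_1,\dots,s_k)}(\mathrm{diag}(s_1)W+\dots+\mathrm{diag}(s_k)W)$, a union of $N_{\mathrm{sub}}\le(n+1)^{D_1k}$ subspaces, each of dimension $d_{\mathrm{sub}}:=kD_1$, which equals $\ell k^2$ (parameterized filters) or $k^2$ (fixed filters); moreover the union is a cone. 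Crucially $\log N_{\mathrm{sub}}=O(d_{\mathrm{sub}}\log n)$.

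\emph{Step 2 (S-REC) and Step 3 (deterministic recovery).}
On one fixed $d$-dimensional subspace, a $1/4$-net of the unit sphere (size $\le 9^d$), $\chi^2$-concentration of $\|\mA v\|_2^2$ for a Gaussian $\mA$ with rows $\mc N(0,\mI/m)$, and net-to-sphere extension yield a two-sided restricted isometry on that subspace with probability $1-e^{-\Omega(m)}$ provided $m\gtrsim d$. A difference of two points of our union lies in a sum of two of the $N_{\mathrm{sub}}$ subspaces (dimension $\le 2d_{\mathrm{sub}}$), of which there are at most $N_{\mathrm{sub}}^2$; a union bound over these, using $\log N_{\mathrm{sub}}^2=O(d_{\mathrm{sub}}\log n)$, shows that for $m=\Omega(d_{\mathrm{sub}}\log n)$ --- exactly the two stated thresholds --- with probability $1-e^{-\Omega(m)}$ the matrix $\mA$ satisfies the set-restricted eigenvalue condition on $\mathrm{Range}(G)$ (with additive slack $0$, since $\mathrm{Range}(G)$ sits inside a cone), together with an upper bound $\|\mA v\|_2\le 2\|v\|_2$ on the union and $\|\mA(\vx^\ast-\bar\vx)\|_2\le 2\|\vx^\ast-\bar\vx\|_2$ for the fixed minimizer $\bar\vx:=G(\bar\mC)$, $\bar\mC\in\arg\min_\mC\|G(\mC)-\vx^\ast\|_2$ (the latter is plain scalar concentration, since $\bar\vx$ is independent of $\mA$). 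Conditioning on these events, the bound then follows from the same deterministic chain as in Lemma~\ref{lem:dimakis} / \citep[Lem.~4.3]{bora_compressed_2017}: $\epsilon$-optimality of $\hat\mC$ gives $\|\mA(\hat\vx-\bar\vx)\|_2\le 2\|\mA(\vx^\ast-\bar\vx)\|_2+2\|\noise\|_2+\epsilon$ via two triangle inequalities, the S-REC lower-bounds the left side by a constant times $\|\hat\vx-\bar\vx\|_2$, the upper bounds control $\|\mA(\vx^\ast-\bar\vx)\|_2$, and one more triangle inequality with $\|\bar\vx-\vx^\ast\|_2=\min_\mC\|G(\mC)-\vx^\ast\|_2$ produces the claimed $6\min+(3/2)\|\noise\|_2+2\epsilon$; the exact constants are set by the net radius and concentration level chosen above.

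\emph{Main obstacle.}
The real work is Step 1, and within it two sub-points: (a) verifying that the pre-ReLU channels lie in a \emph{fixed} subspace of the stated dimension --- using that $\mB_1$ is fixed and that $\mT(\vc)\vu$ is linear in $\vc$ with $\le\ell$-dimensional range --- and (b) the sign-pattern count, i.e.\ that a $D$-dimensional subspace of $\reals^n$ realizes at most $O(n^{D})$ sign patterns, which is precisely what makes $\log n$, and hence the $\ell k^2\log n$ and $k^2\log n$ thresholds, appear. Everything after Step 1 is routine Gaussian-net bookkeeping plus the deterministic triangle-inequality chain.
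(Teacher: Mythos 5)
Your proposal is correct and follows essentially the same route as the paper's own proof: the paper likewise rewrites each pre-ReLU channel as $\mH(\mB_1)\vc_{1i}$ (a fixed subspace of dimension at most $k\ell$, or $k$ for fixed filters), absorbs the ReLU into diagonal sign-pattern matrices counted via Lemma~\ref{lem:signpatterns}, concludes that $\mathrm{Range}(G)$ lies in a union of at most $n^{\ell k^2}$ subspaces of dimension $\ell k^2$ (Lemma~\ref{lem:1layerinsubspace}), and then combines subspace concentration with a union bound to verify the SREC and invokes the deterministic recovery chain of \citep[Lem.~4.3]{bora_compressed_2017}. The only differences are cosmetic: you re-derive the per-subspace concentration via nets rather than citing \citep[Thm.~9.9]{foucart_mathematical_2013}, and you count sign patterns via hyperplane arrangements rather than via the paper's Lemma~\ref{lem:signpatterns}.
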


The proof makes use of a lemma from~\citep{bora_compressed_2017} which introduces a variation of the Restricted Eigenvalue Condition~(REC)  from the compressed sensing literature and connects it to optimization over a not necessarily convex set of vectors.
A crucial difference of our setup and the one in~\citep{bora_compressed_2017}, is that in the latter setup, optimization is over the input of a neural network, whereas in our setup the input is fixed and we optimize over the weights of the network. 

\begin{definition}[{\citep[Def.~1]{bora_compressed_2017}}]
Let $\setS$ be a subset of point in $\reals^n$. 
A matrix $\mA$ is said to satisfy the SREC for the set $\setS$ and parameter $\gamma > 0$ if for all $\vx,\vx' \in \setS$,
\[
\norm[2]{\mA (\vx_1 - \vx_2)} \geq 
\gamma 
\norm[2]{\vx_1 - \vx_2}.
\]
\end{definition}

The following lemma gives the SREC an operational meaning, and follows by some algebraic manipulations from the definitions.

\begin{lemma}[{\citep[Lem.~4.3]{bora_compressed_2017}}]
Let $\mA$ be a random matrix satisfying the SREC$(\setS,\gamma)$ with probability $1-\delta$
and that obeys $\norm[2]{\mA \vx} \leq 2 \norm[2]{\vx}$, again with probability at least $1-\delta$.
For any $\vx^\ast$, let $\vy = \mA \vx^\ast + \noise$, and suppose $\hat \vx$ minimizes $\norm{\vy - \mA \vx}$ over $\vx \in \setS$ to within an additive $\epsilon$ of the optimum. 
Then,
\[
\norm[2]{\hat \vx - \vx^\ast}
\leq
\left( 4/\gamma + 1 \right)
\min_{\vx \in \setS} \norm[2]{\vx^\ast - \vx}
+ \frac{1}{\gamma} ( 2 \norm[2]{\noise} + \epsilon )
\]
with probability at least $1-2\delta$.
\end{lemma}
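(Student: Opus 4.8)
The plan is to run the standard set-restricted compressed-sensing argument, taking care throughout that $\vx^\ast$ need not itself lie in $\setS$. First I would introduce the best in-set approximation $\bar\vx = \arg\min_{\vx \in \setS} \norm[2]{\vx^\ast - \vx}$, so that $\bar\vx \in \setS$ and $\norm[2]{\vx^\ast - \bar\vx}$ is exactly the approximation term appearing in the conclusion. Since the minimizer $\hat\vx$ also lies in $\setS$ by hypothesis, both $\hat\vx$ and $\bar\vx$ belong to $\setS$, which is precisely the regime where the SREC may be invoked. I would then condition on the two stated high-probability events, namely the SREC$(\setS,\gamma)$ bound together with the upper bound $\norm[2]{\mA\vz}\leq 2\norm[2]{\vz}$ applied to the single fixed vector $\vz = \vx^\ast - \bar\vx$; by a union bound these hold simultaneously with probability at least $1-2\delta$, and on this event the rest of the argument is purely deterministic.

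The core estimate splits the error by the triangle inequality as $\norm[2]{\hat\vx - \vx^\ast} \leq \norm[2]{\hat\vx - \bar\vx} + \norm[2]{\bar\vx - \vx^\ast}$. For the first summand I would pass into measurement space via the SREC, writing $\norm[2]{\hat\vx - \bar\vx} \leq \gamma^{-1}\norm[2]{\mA(\hat\vx - \bar\vx)}$, and then bound the measurement-space quantity by inserting $\vy$ and applying the triangle inequality, $\norm[2]{\mA(\hat\vx - \bar\vx)} \leq \norm[2]{\vy - \mA\hat\vx} + \norm[2]{\vy - \mA\bar\vx}$. The approximate optimality of $\hat\vx$ gives $\norm[2]{\vy - \mA\hat\vx} \leq \norm[2]{\vy - \mA\bar\vx} + \epsilon$, so both residual terms collapse to the residual at $\bar\vx$ plus the slack $\epsilon$.

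Finally I would control $\norm[2]{\vy - \mA\bar\vx}$ by writing $\vy - \mA\bar\vx = \mA(\vx^\ast - \bar\vx) + \noise$ and using the upper-norm event on the fixed vector $\vx^\ast - \bar\vx$, which yields $\norm[2]{\vy - \mA\bar\vx} \leq 2\norm[2]{\vx^\ast - \bar\vx} + \norm[2]{\noise}$. Chaining the inequalities gives $\norm[2]{\mA(\hat\vx - \bar\vx)} \leq 4\norm[2]{\vx^\ast - \bar\vx} + 2\norm[2]{\noise} + \epsilon$, hence $\norm[2]{\hat\vx - \bar\vx} \leq \gamma^{-1}(4\norm[2]{\vx^\ast - \bar\vx} + 2\norm[2]{\noise} + \epsilon)$; adding back $\norm[2]{\bar\vx - \vx^\ast}$ and recalling that it equals $\min_{\vx \in \setS}\norm[2]{\vx^\ast - \vx}$ collects the constants into the factor $(4/\gamma + 1)$ on the approximation term and $\gamma^{-1}$ on $2\norm[2]{\noise} + \epsilon$, which is the claim. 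The only genuine subtlety, and the step I would be most careful with, is that $\vx^\ast \notin \setS$ in general, so the SREC cannot be applied to the pair $(\hat\vx, \vx^\ast)$ directly; routing every comparison through $\bar\vx$ and invoking the deterministic upper-norm bound only for the single fixed difference $\vx^\ast - \bar\vx$ is exactly what makes the inequalities line up with the claimed constants and the $1-2\delta$ confidence.
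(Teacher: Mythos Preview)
Your argument is correct and is precisely the standard chain of inequalities that constitutes the proof of \cite[Lem.~4.3]{bora_compressed_2017}; the paper does not reproduce a proof here but simply cites that reference and remarks that the lemma ``follows by some algebraic manipulations from the definitions,'' which is exactly what you have carried out. Your care in routing all comparisons through the in-set proxy $\bar\vx$ so that the SREC is only ever applied to a pair in $\setS$, and in applying the upper-norm event only to the single deterministic vector $\vx^\ast-\bar\vx$, is the right way to handle the case $\vx^\ast\notin\setS$ and matches the original argument.
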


The proof is concluded by choosing $\gamma = 4/5$ and applying the following lemma with $\alpha = 1 - 4/5$.

\begin{lemma}
\label{eq:mainlemma}
Let $G$ be the one-layer decoder network in~\eqref{eq:onelayerdec}. 
And defined the set $\setS = \{ G(\mC) \colon \mC \in \reals^N \}$, where $N$ is the total number of parameters of the deep decoder, i.e., $N$. 
Let $\mA \in \reals^{m \times n}$ be a Gaussian random projection matrix with $m  = \Omega( N \log(n) / \alpha^2)$ where $\alpha \in (0,1)$ is some fixed constant.
Then $\mA$ satisfies the SREC$(\setS, 1-\alpha)$ with probability at least $1 - e^{-\Omega(\alpha^2 m)}$.
\end{lemma}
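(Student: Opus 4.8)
The plan is to exhibit $\setS = \{G(\mC) : \mC \in \reals^N\}$ as a union of $e^{O(N\log n)}$ linear subspaces each of dimension at most $N$, and then to invoke the standard fact that a Gaussian matrix acts as a near-isometry on a fixed low-dimensional subspace with overwhelming probability. For the first part I would linearize the one-layer decoder for a frozen ReLU activation pattern. Writing the $i$-th pre-activation channel as $\sum_{j=1}^k \mT(\vc_{1ij})\mU_1\vb_{1j}$ and using commutativity of circular convolution, $\mT(\vc_{1ij})\mU_1\vb_{1j} = \mathrm{Circ}(\mU_1\vb_{1j})\vc_{1ij}$, so the $i$-th pre-activation equals $W\theta_i$, where $W = [\mathrm{Circ}(\mU_1\vb_{11}),\ldots,\mathrm{Circ}(\mU_1\vb_{1k})] \in \reals^{n\times \ell k}$ depends only on the fixed input tensor $\mB_1$ and $\theta_i = (\vc_{1i1},\ldots,\vc_{1ik})$ stacks the filters feeding channel $i$ (in the fixed-filter case $\ell$ is replaced by $1$). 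For a frozen activation pattern $\pi = (\pi_1,\ldots,\pi_k)$ with $\pi_i \in \{0,1\}^n$, the ReLU on channel $i$ is multiplication by $\diag(\pi_i)$, hence $G(\mC) = \sum_{i=1}^k (\vc_2)_i \diag(\pi_i)W\theta_i$, which lies in the subspace $\sum_i \mathrm{range}(\diag(\pi_i)W)$ of dimension at most $\ell k^2 \le N$ (resp. $k^2 \le N$ for fixed filters). It remains to count attainable patterns: the $\theta_i$ are independent, and for each channel the sign vector of $W\theta_i \in \reals^n$ takes at most $(en/(\ell k))^{\ell k}$ values as $\theta_i$ varies over $\reals^{\ell k}$ by a standard hyperplane-arrangement (Sauer--Shelah) bound, so there are at most $(en/(\ell k))^{\ell k^2} = e^{O(N\log n)}$ patterns in total. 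Thus $\setS \subseteq \bigcup_{t=1}^{L}\mathcal V_t$ with $\log L = O(N\log n)$ and $\dim \mathcal V_t \le N$.

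Given this structure, the SREC reduces to a near-isometry statement over subspaces. For $\vx\in\mathcal V_s$ and $\vx'\in\mathcal V_t$ the difference lies in $\mathcal V_s + \mathcal V_t$, a subspace of dimension at most $2N$ taking one of at most $L^2$ values; so it suffices that $\norm[2]{\mA v}\ge(1-\alpha)\norm[2]{v}$ for every $v$ in each of these $\le L^2$ subspaces (and $\norm[2]{\mA v}\le 2\norm[2]{v}$ as well, for the companion hypothesis of Lemma~4.3 of~\cite{bora_compressed_2017}). For a fixed subspace of dimension $D$, expressing $\mA$ in an orthonormal basis of that subspace yields an $m\times D$ matrix with iid $\mc N(0,1/m)$ entries, whose extreme singular values lie in $[\,1-\sqrt{D/m}-t,\;1+\sqrt{D/m}+t\,]$ with probability at least $1 - 2e^{-mt^2/2}$; with $m = \Omega(D/\alpha^2)$ and $t = \Theta(\alpha)$ this gives the two-sided bound on that subspace with probability $1 - e^{-\Omega(\alpha^2 m)}$ (equivalently one runs an $\epsilon$-net argument on the unit sphere of the subspace together with Gaussian concentration). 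Union-bounding over the $\le L^2$ difference subspaces, the failure probability is at most $e^{2\log L - \Omega(\alpha^2 m)}$; since $\log L = O(N\log n)$ and $D \le 2N$, choosing the implicit constant in $m = \Omega(N\log n/\alpha^2)$ large enough absorbs both $2\log L$ and the per-subspace requirement, leaving failure probability $e^{-\Omega(\alpha^2 m)}$, which proves the claim.

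The delicate step is the first one. Because here optimization is over the network \emph{weights} rather than over its input as in~\cite{bora_compressed_2017}, one must check that for a frozen activation pattern the output still lies in a subspace of dimension $O(N)$ despite the bilinear dependence on $(\{\vc_{1ij}\},\vc_2)$ — this is what the substitution $\phi_i := (\vc_2)_i\theta_i$ handles — and one must count the attainable patterns tightly enough, in terms of the parameter count $N$, that the bound specializes correctly to both the parameterized-filter regime ($N \asymp \ell k^2$) and the fixed-filter regime ($N \asymp k^2$) of Theorem~\ref{prop:maincs}. Everything after that is the textbook Gaussian near-isometry on a subspace together with a union bound, so the exponent $\Omega(\alpha^2 m)$ and the measurement count $m = \Omega(N\log n/\alpha^2)$ come out exactly as stated.
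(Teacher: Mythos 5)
Your proof is correct and follows essentially the same route as the paper's: both decompose the range of $G$ into a union of $n^{O(N)}$ subspaces of dimension $O(N)$ by freezing the ReLU sign patterns (the paper writes each pre-activation channel as a fixed Hankel-type matrix built from $\mU_1\mB_1$ acting on the stacked filter coefficients, which is exactly your $W\theta_i$), counts the attainable patterns by a hyperplane-arrangement bound, and finishes with Gaussian near-isometry on each fixed difference subspace plus a union bound, absorbing the $\log$ of the subspace count into $m = \Omega(N\log n/\alpha^2)$. The only cosmetic difference is that you make the bilinearity in $(\theta_i,\vc_2)$ explicit via the substitution $\phi_i = (\vc_2)_i\theta_i$, whereas the paper handles it implicitly by noting that the containing subspace is determined by the sign-pattern matrices alone.
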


We conclude the providing a proof of lemma~\ref{eq:mainlemma}.


\subsection{Proof of Lemma~\ref{eq:mainlemma}}

We prove the result for the case where the filter weights are parameters, and thus the total number of parameters is given by $N = \ell k^2 + k$. The case where the filters are fixed, and thus the number of parameters is $N = k^2 + k$ is slightly simpler and follows in an analogous manner.

\begin{lemma}
\label{lem:1layerinsubspace}
Let $G$ be the one-layer decoder network in~\eqref{eq:onelayerdec}. 
Then $G$ lies in the union of at most $n^{\ell k^2}$ many $\ell k^2$-dimensional subspaces.
\end{lemma}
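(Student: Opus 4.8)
The plan is to use that the only nonlinearity in $G$ is the coordinatewise ReLU, so that once the \emph{activation pattern} of the pre-activation matrix is fixed, $G$ reduces to a linear map of a bounded number of effective parameters; then count how many activation patterns can occur.

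First I would reorganize the pre-activation. Write $\mathbf{Z}(\mC)$ for the $n\times k$ matrix inside the ReLU in~\eqref{eq:onelayerdec}. Since a circular convolution $\mT(\vc)\vz$ is bilinear in $(\vc,\vz)$ and each $\mU_1\vb_{1j}$ is fixed, the map $\vc\mapsto\mT(\vc)\mU_1\vb_{1j}$ is linear. Stacking the filters $\vc_{1i1},\dots,\vc_{1ik}$ of the $i$-th output channel into one vector $\vc_{1i}\in\reals^{\ell k}$ and collecting the associated linear maps into a single matrix $\mathbf{W}\in\reals^{n\times\ell k}$ (which depends only on $\mU_1$ and $\mB_1$), the $i$-th column of $\mathbf{Z}(\mC)$ equals $\mathbf{W}\vc_{1i}$. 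Hence $\mathbf{Z}(\mC)=\mathbf{W}\mathbf{C}$ with $\mathbf{C}=[\vc_{11},\dots,\vc_{1k}]\in\reals^{\ell k\times k}$ carrying exactly the $\ell k^2$ filter parameters, and $G(\mC)=\relu(\mathbf{W}\mathbf{C})\vc_2$.

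Next, fix an activation pattern $P\in\{0,1\}^{n\times k}$ and set $\setV_P=\{(P\odot(\mathbf{W}\tilde{\mathbf{C}}))\vect{1}:\tilde{\mathbf{C}}\in\reals^{\ell k\times k}\}$, where $\odot$ is the entrywise product and $\vect{1}\in\reals^k$. For any $\mC$ whose pre-activation $\mathbf{W}\mathbf{C}$ has positivity pattern $P$ one has $\relu(\mathbf{W}\mathbf{C})=P\odot(\mathbf{W}\mathbf{C})$, so $G(\mC)=\sum_{i=1}^k(\vc_2)_i\,(P\odot(\mathbf{W}\mathbf{C}))_{\cdot,i}=(P\odot(\mathbf{W}\tilde{\mathbf{C}}))\vect{1}$ with $\tilde{\mathbf{C}}$ the matrix whose $i$-th column is $(\vc_2)_i\vc_{1i}$; thus $G(\mC)\in\setV_P$. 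Since $\tilde{\mathbf{C}}\mapsto(P\odot(\mathbf{W}\tilde{\mathbf{C}}))\vect{1}$ is linear with a domain of dimension $\ell k^2$, the set $\setV_P$ is a linear subspace of $\reals^n$ of dimension at most $\ell k^2$ (note the output weights $\vc_2$ get absorbed, so the count is $\ell k^2$ and not $\ell k^2+k$).

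Finally I would count realizable activation patterns. The $i$-th column of $\mathbf{W}\mathbf{C}$ is $\mathbf{W}\vc_{1i}$, whose sign pattern is determined by the signs of the $n$ homogeneous linear functionals $\vc\mapsto(\mathbf{W}\vc)_r$ on $\reals^{\ell k}$; a standard hyperplane-arrangement bound gives at most $n^{\ell k}$ such patterns. As the $k$ columns vary independently, at most $(n^{\ell k})^{k}=n^{\ell k^2}$ activation patterns occur, whence $\mathrm{range}(G)\subseteq\bigcup_P\setV_P$ is a union of at most $n^{\ell k^2}$ subspaces of dimension at most $\ell k^2$, as claimed. The only delicate step is the bookkeeping in the middle two paragraphs — verifying that the composition of the parameterized convolution with the output weights really collapses to $\ell k^2$ linear degrees of freedom once the ReLU pattern is frozen; the sign-pattern counting at the end is routine.
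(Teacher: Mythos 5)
Your proposal is correct and follows essentially the same route as the paper: both rewrite the pre-activation as a fixed matrix (the paper's Hankel matrix $\mH(\mB_1)$, your $\mathbf{W}$) times the stacked filter parameters, freeze the ReLU activation pattern to obtain a linear map whose image is an at-most-$\ell k^2$-dimensional subspace, and bound the number of realizable patterns by $n^{\ell k}$ per output channel via the sign-pattern/hyperplane-arrangement count (the paper's Lemma on sign patterns), giving $n^{\ell k^2}$ subspaces in total. Your explicit absorption of the output weights $\vc_2$ into the rescaled parameter matrix $\tilde{\mathbf{C}}$ is a slightly cleaner piece of bookkeeping than the paper's terse treatment of that step, but it is not a different argument.
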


As a consequence of lemma~\ref{lem:1layerinsubspace}, the vector $\vx' = G(\mC_1) - G(\mC_2)$ lies in the union of at most $n^{2\ell k^2}$ many $2\ell k^2$ dimensional subspace. 
From standard results in compressive sensing (see for example~\citep[Thm.~9.9, Rem.~9.10]{foucart_mathematical_2013}, a Gaussian random matrix with i.i.d.~$\mc N(0,1/m)$ entries satisfies, for $\setU$ a $2\ell k^2$-dimensional subspace
\[
\PR{
\norm[2]{\mA \vx'} \geq (1-\alpha) \norm[2]{\vx'}, \text{ for all } \vx' \in \setU
} \geq 1 - e^{-\Omega(\alpha^2 m)}
\]
provided that $m = \Omega( \ell k^2 / \alpha^2 )$.
Taking the union bound over all the $n^{2\ell k^2}$-dimensional subspace, we get that $\mA$ satisfies the SREC($\{ G(\mC) \colon \mC \}, 1-\alpha)$ with probability at least $1 - n^{2\ell k^2} e^{-\Omega(\alpha^2 m)}$. 
Rescaling $\alpha$, we can conclude that 
$\mA$ satisfies the SREC($\{ G(\mC) \colon \mC \}, 1-\alpha)$ with probability at least $1 - e^{-\Omega(\alpha^2 m)}$ provided that $m = \Omega( \ell k^2 \log(n) / \alpha^2)$, which concludes the proof.

\begin{proof}[Proof of Lemma~\ref{lem:1layerinsubspace}]
We start by re-writing~\eqref{eq:onelayerdec} in a more convenient form.
First observe that we can write
\[
\mT(\vc) \mU_1 \vb 
=
\mH(\vb) \vc,
\]
where $\mH( \mU_1\vb) \in \reals^{n\times \ell}$ are the first $\ell$ columns of a Hankel matrix with first column equal to $\mU_1\vb$. To see this, note that for the example convolution matrix in equation~\eqref{eq:Tc}, the Hankel matrix becomes
\[
\mH(\vb)
=
\begin{bmatrix}
b_1 &  b_2 & b_3 \\
b_2 & b_3 & b_4 \\ 
b_3 & b_4 & b_5 \\ 
b_4 & b_5 & b_1 \\ 
b_5 & b_1 & b_2 
\end{bmatrix}.
\]
With this notation,
\begin{align*}
&G(\mC)\\
&=
\relu
\left( \left[ 
\sum_{j=1}^k \mH(\mU_1\vb_{1j}) \vc_{11j}, 
\ldots,
\sum_{j=1}^k \mH(\mU_1\vb_{1j}) \vc_{1kj}
\right] \right) \\
&=
\mU_1 \relu
\left( \left[ \mH(\mB_1)\vc_{11},
\ldots,
\mH(\mB_1)\vc_{1k}
\right] \right),
\end{align*}
where we defined
\[
\mH(\mB_1) = [\mH(\mU_1\vb_{11}), \ldots, \mH(\mU_1\vb_{1k}) ] \in \reals^{n \times k\ell}
\]
and $\vc_{1i} = \transp{[ \transp{\vc}_{1i1}, \ldots, \transp{\vc}_{1ik} ]} \in \reals^{k\ell}$.
For a given vector $\vx \in \reals^n$, denote by $\diag(\vx > 0)$ the matrix that contains one on its diagonal if the respective entry of $\vx$ is positive and zero otherwise. 
Denote by $\mW_{ji} \in \{0,1\}^{k\ell \times k\ell}$ the corresponding diagonal matrix $\mW_{ji} = \diag(\mH(\mB_j) \vc_{ji}>0)$. 
With this notation, we can write
\begin{align*}
G(\mC)
&=
\mU_1 
\left[ \mW_{11}\mH(\mB_1)\vc_{11},
\ldots,
\mW_{1k}\mH(\mB_1)\vc_{1k}
\right].
\end{align*}
Thus, $G(\mC)$ lies in the union of at-most-$\ell k^2$-dimensional subspaces of $\reals^n$, where each subspace is determined by the matrices $\{ \mW_{1j} \}_{j=1}^k$. 
The number of those subspaces is bounded by $n^{\ell k^2}$. This follows from the fact that for each matrix $\mW_{1j}$, by Lemma~\ref{lem:signpatterns} below, the number of different matrices is bounded by $n^k$. Since there are $k$ matrices, the number of different sets of matrices is bounded by $n^{\ell k^2}$. 
\begin{lemma}
\label{lem:signpatterns}
For any $\mW \in \reals^{n\times k}$ and $k\geq 5$,
\[
| \{ \diag(\mW \vv > 0) \mW | \vv \in \reals^k \} |
\leq 
n^{k}.
\]
\end{lemma}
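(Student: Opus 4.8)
The plan is to bound the number of distinct sign patterns $\diag(\mW\vv > 0)$ that can arise as $\vv$ ranges over $\reals^k$, since each such pattern determines the matrix $\diag(\mW\vv>0)\mW$ (there are at most as many matrices as sign patterns). Write $\mW = [\vw_1^\transp; \ldots; \vw_n^\transp]$ with rows $\vw_i \in \reals^k$. The $i$-th diagonal entry of $\diag(\mW\vv>0)$ is $\ind{\langle \vw_i, \vv\rangle > 0}$, so a sign pattern is precisely an assignment, for each $i$, of which side of the hyperplane $\{\vv : \langle \vw_i, \vv\rangle = 0\}$ the point $\vv$ lies on. Thus the number of sign patterns is at most the number of regions (cells) into which $n$ hyperplanes through the origin partition $\reals^k$.

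The key step is a standard hyperplane-arrangement counting bound: $n$ hyperplanes through the origin in $\reals^k$ divide the space into at most $2\sum_{j=0}^{k-1}\binom{n-1}{j}$ regions (this is a classical combinatorial-geometry fact; one can also cite Cover's function-counting theorem or the Vapnik--Chervonenkis style bound on linearly separable dichotomies). I would invoke this directly. It then remains to check that $2\sum_{j=0}^{k-1}\binom{n-1}{j} \leq n^k$ for $k \geq 5$. One crude route: each term $\binom{n-1}{j} \leq n^{k-1}$ for $j \leq k-1$, so the sum is at most $k\, n^{k-1}$, and $2k\, n^{k-1} \leq n^k$ whenever $n \geq 2k$; for the small-$n$ regime $n < 2k$ one notes the number of cells is trivially at most $2^n < n^k$ once $n \geq 2$ and $k \geq 5$ (and the statement is vacuous or trivial for $n = 1$). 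Either way the bound $n^k$ is comfortably loose, so I would not belabor the arithmetic — a one-line estimate suffices.

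The only mild subtlety, and the part I would state carefully, is the reduction from ``number of matrices $\diag(\mW\vv>0)\mW$'' to ``number of cells in the arrangement'': distinct $\vv$ in the same open cell give the same indicator vector and hence the same matrix, and $\vv$ lying on some hyperplane (a measure-zero set) can only produce matrices also obtainable as limits from adjacent cells, so it contributes nothing new to the count beyond what the closed-cell bound already absorbs. Since we only need an upper bound, I would simply say: the map $\vv \mapsto \diag(\mW\vv>0)$ is constant on each open region of the arrangement, hence takes at most as many values as there are regions, which is at most $n^k$ by the above. This is essentially immediate and I expect no real obstacle; the lemma is a packaging of the classical bound on the number of linearly induced dichotomies of $n$ points in $\reals^k$.
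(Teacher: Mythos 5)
The paper does not actually prove this lemma: it is stated at the end of the proof of Lemma~\ref{lem:1layerinsubspace} and left without any argument, so there is no ``paper proof'' to compare against. Your proposal supplies the standard argument one would expect here --- reduce to counting the 0/1 patterns $\bigl(\ind{\langle \vw_i,\vv\rangle>0}\bigr)_{i=1}^n$, identify these with cells of the central hyperplane arrangement $\{\vv\colon\langle\vw_i,\vv\rangle=0\}$, and invoke the Cover/Schl\"afli bound $2\sum_{j=0}^{k-1}\binom{n-1}{j}$ --- and the quantitative step ($2k\,n^{k-1}\le n^k$ for $n\ge 2k$, trivial $2^n$ bound otherwise) is fine. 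So in substance your argument is correct and fills a gap the paper leaves open.

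One step, however, is justified incorrectly as written: the treatment of $\vv$ lying on a hyperplane. You claim such points ``can only produce matrices also obtainable as limits from adjacent cells,'' but the indicator $\ind{\cdot>0}$ is not continuous, and a boundary point receives the value $0$ on every hyperplane it lies on, producing a pattern that need not be realized by \emph{any} open cell (e.g.\ $\vw_1=-\vw_2\ne 0$ and $\vv=0$ gives the all-zeros pattern, realized only at the origin). Hence the number of patterns can strictly exceed the number of open regions, and the open-region count alone does not close the argument. The fix is easy and does not change the conclusion: either observe that the pattern map is constant on each \emph{face} of the arrangement and bound the number of faces (still polynomial of degree $k$ in $n$), or, more cleanly, note that these patterns are exactly the dichotomies induced by homogeneous linear threshold functions with the $>0$ versus $\le 0$ convention, whose growth function is at most $\sum_{j=0}^{k}\binom{n}{j}$ by Sauer--Shelah (VC dimension $k$); this is at most $(k+1)\binom{n}{k}\le n^k$ for $k\ge 3$ and $n\ge 2k$, with the small-$n$ regime handled by $2^n\le n^k$ as you do. Separately, note that the stated bound is literally false for $n=1$ (a nonzero row yields the two matrices $\mW$ and $\mathbf{0}$, exceeding $1^k=1$), so the lemma implicitly assumes $n\ge 2$; this is harmless in context but worth flagging rather than calling the case vacuous.
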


\end{proof}





%


\end{document}